\colorlet{shadecolor}{yellow!75}
\newtheorem{remark}{Remark}
\newtheorem{assumption}{Assumption}
\newtheorem{theorem}{Theorem }
\begin{document}
\title{Closed-Loop Error Learning Control for Uncertain Nonlinear Systems With Experimental Validation on a Mobile Robot}
\author{Erkan~Kayacan,~\IEEEmembership{Senior Member, IEEE}
\thanks{The information, data, or work presented herein was funded in partby the Advanced Research Projects Agency-Energy (ARPA-E), U.S.Department of Energy, under Award Number DE-AR0000598.}
\thanks{The author is with the School of Mechanical $\&$ Mining Engineering, the University of Queensland, Brisbane. QLD 4072 Australia.
e-mail: e.kayacan@uq.edu.au}

}

\markboth{IEEE/ASME TRANSACTIONS ON MECHATRONICS, VOL. 24, NO. 5, OCTOBER 2019, PREPRINT}
{Shell \MakeLowercase{\textit{et al.}}: Bae Demo of IEEEtran.cls for Journals}
\maketitle

\begin{abstract}

This paper develops a Closed-Loop Error Learning Control (CLELC) algorithm for feedback linearizable systems with experimental validation on a mobile robot. Traditional feedback and feedforward controllers are designed based on the nominal model by using Feedback Linearization Control (FLC) method. Then, an intelligent controller is designed based on sliding mode learning algorithm that utilizes closed-loop error dynamics to learn the system behavior. The controllers are working in parallel, and the intelligent controller can gradually replace the feedback controller from the control of the system. In addition to the stability of the sliding mode learning algorithm, the closed-loop stability of an $n$th order feedback linearizable system is proven. The simulation results demonstrate that CLELC algorithm can improve control performance (e.g., smaller rise time, settling time and overshoot) in the absence of uncertainties, and also provides robust control performance in the presence of uncertainties as compared to traditional FLC method. To test the efficiency and efficacy of CLELC algorithm, the trajectory tracking problem of a tracked mobile robot is studied in real-time. The experimental results demonstrate that CLELC algorithm ensures high-accurate trajectory tracking performance than traditional FLC method.

\end{abstract}

\begin{IEEEkeywords}
Feedback linearization, learning control, mobile robot,  nonlinear system, sliding mode learning algorithm, uncertain system.
\end{IEEEkeywords}

\IEEEpeerreviewmaketitle

\section{Introduction}

\IEEEPARstart{C}{ontrol}  of uncertain nonlinear systems is one of the most important topics in modern control engineering \cite{KAYACAN2012863, Chen2016, KAYACAN2017276, MATRAJI2018167, 7879290, 8306927, 8316974}. Controllers intend to achieve the best control performance in the presence of the worst uncertainties in robust control approaches, and a high controller gain is the general method to handle the effect of uncertainties in nonlinear control theory \cite{Kayacan2017Hinf, 7323849}. However, these methods result in substantial control action, and very powerful actuators are thus demanded to perform unnecessarily excessive control actions \cite{7762162}. Furthermore, the robust control performance is generally established by sacrificing the nominal control performance because the nominal control performance is not taken into account \cite{7574310, GUENOUNE201723}. Therefore, a control method is required to remain or improve nominal control performance in the absence of uncertainties, and exhibit robust control performance in the presence of uncertainties \cite{erkanasjc, Huang2015, AMINI20171, 8170236, 8326557}.

Autonomous guidance, navigation and control of mobile robots in unstructured, off-road is one of the major problems in field robotics \cite{Mousazadeh2013, Deremetz2017, BEGNINI201727}. Mobile robots in off-road terrain face to different surface materials and terrain topographies; therefore, learning-based control techniques are necessary for mobile robots to learn the unmodeled surface conditions (e.g., grass, sand, and snow) and complex robot dynamics \cite{Ostafew2016}. Modeling of these effects is so arduous because it is not always possible to know the terrain type and wheel-terrain interactions. Even if models exist, the identification of model parameters is cumbersome. Therefore, researchers have focused on learning-based controllers for mobile robots \cite{Rossomando2014, yu_chen_2015, Ostafew2013}. Online adaptive controller based on a kinematic model with wheel slip angles was developed in \cite{Cariou2009}, where observers estimated the slip angles. Modeled and identified uncertainties partially improved mobile robot path tracking performance; however, it was not sufficient. To model and identify uncertainties fully, a framework consisting of moving horizon estimation and model predictive control methods was proposed for articulated unmanned ground vehicles \cite{7525615, Kayacan2018bc, 8409989} and mobile robots \cite{erkanjfr, KayacanRSS}. In these works, an adaptive kinematic model was derived by adding traction parameters for longitudinal and side slips into the traditional kinematic model. Nonlinear moving horizon estimation method estimated these traction parameters and nonlinear model predictive controller designed based on the adaptive kinematic model with these traction parameters estimates generated control signals. Since parameter estimation in nonlinear moving horizon estimator is carried out in the arrival cost, the estimates are assumed to be changing very slowly, and the most recent estimates do not have any information on the measurements in the estimation horizon. Therefore, our learning approach must (i) take the most recent measurements into account, (ii) be robust against fast changing working conditions and (iii) enable the representation of complex disturbance characteristics.

As distinct from previous studies on sliding mode learning algorithms \cite{erkansmlc}, the Closed-Loop Error Learning Control (CLELC) algorithm developed in this paper does not require a high controller gain in feedback control action to guarantee system stability so that it does not cause chattering effects on control signals.  Moreover, the CLELC algorithm demonstrates two remarkable features. First, it does not cause any adverse effects on the system in the absence of uncertainties, and it additionally improves control performance when compared to nominal control performance obtained by Feedback Linearization Control (FLC). Second, the proposed method exhibits robust control performance for feedback linearizable systems with uncertainties.

The main contributions of this study are as follows: A novel control method termed as a CLELC algorithm is developed and implemented in this paper. In this novel control scheme, closed-loop error dynamics are utilized to learn system behavior for adaptation. After a short duration, the intelligent controller takes the overall control action while the output of the feedback controller converges to zero. Moreover, the overall system stability for a feedback linearizable system with uncertainties is proven based on Lyapunov stability theory under the bounded error, reference and feedforward action rates. The stability analysis shows that an $n$th order feedback linearizable system controlled by the CLELC algorithm is stable. Along with the theoretical results, this study demonstrates trajectory tracking-test results of the developed CLELC algorithm on a tracked mobile robot. 

The organization of the paper is as follows: The problem formulation is given in Section \ref{sec_prob_form}. The CLELC algorithm is formulated Section \ref{sec_cls}. The intelligent controller is designed and the basics of the sliding mode learning algorithm are given in Sections \ref{sec_anfs} and \ref{sec_smla}, respectively, while the closed-loop system stability is proven in Section \ref{sec_overalstability}. Simulation results for a third-order nonlinear system and experimental validation on a mobile robot are given in Sections \ref{sec_simresults} and \ref{sec_expval}, respectively. Finally, conclusions of  the study are summarized in Section \ref{sec_conc}.

\section{Problem Formulation}\label{sec_prob_form}

An $n$th order feedback linearizable system with uncertainties is written as below:
\begin{eqnarray}\label{eq_nonlinearsystem}
\dot{x}_{i} &=& x_{i+1} \nonumber \\
\dot{x}_{n} &=&  a(\textbf{x}) +  b(\textbf{x}) u + \Delta (\textbf{x},u) 
\end{eqnarray}
where $i=1,2,\dots,n-1)$, $\textbf{x}=[x_{1}, x_{2}, \hdots, x_{n}] \in R^{n}$ is the state vector, $u \in R$ is the control input, $a(\textbf{x})$ and $b(\textbf{x})$ are smooth nonlinear functions, and $\Delta (\textbf{x}, u)$ is the term for system uncertainty where $\Delta (\textbf{x}, u) = \Delta a(\textbf{x}) + \Delta b(\textbf{x}) u$.

Traditional feedback linearization control (FLC) law is formulated below:
\begin{equation} \label{eq_ndi_controllaw}
u = b^{-1}(\textbf{x}) \Big( -a(\textbf{x}) + u_{b} + u_{f} \Big)
\end{equation}
where $u_{b}$ and $u_{f}$ are the feedback and feedforward control inputs, and formulated below:
\begin{eqnarray}\label{eq_fb}
u_{b} & = & \textbf{k} \textbf{e} = k_{1} e_{1} + k_{2} e_{2} + \hdots + k_{n} e_{n} \\\label{eq_ff}
u_{f} & = & \dot{r}_{n} 
\end{eqnarray}
where $\textbf{k}=[k_{1}, k_{2}, \hdots, k_{n}]$ is the feedback controller gain vector and positive, i.e., $k_{i} (i=1,2,\dots,n)>0$, and $\textbf{e}=[e_{1}, e_{2}, \hdots, e_{n}]^{T}=[r_{1}-x_{1}, r_{2}-x_{2}, \hdots, r_{n}-x_{n}]^{T}$ is the error vector where $\textbf{r}=[r_{1}, r_{2}, \hdots, r_{n}]^{T}$ is the reference vector and defined as $[\dot{r}_{1}, \dot{r}_{2}, \hdots, \dot{r}_{n}]^{T}=[r_{2}, r_{3}, \hdots, r_{n+1}]^{T}$ so that the reference $r_1$ must be continuously $(n+1)$ times differentiable.

If FLC law \eqref{eq_ndi_controllaw} is employed to control an nth-order feedback linearizable system with uncertainties \eqref{eq_nonlinearsystem}, the closed-loop error dynamics are calculated below:
\begin{equation}\label{eq_ndi_error}
\dot{e}_{n} + k_{n} {e}_{n} + k_{n-1} e_{n-1} + \hdots +  k_{1} e_{1} = - \Delta (\textbf{x},u)
\end{equation}

Equation \eqref{eq_ndi_error} shows that the error dynamics cannot be driven to zero under FLC control law \eqref{eq_ndi_controllaw} in the presence of uncertainties.

\section{Closed-Loop Error Learning Control Algorithm}\label{sec_cls}

In this study, a CLELC algorithm is developed where the feedback, feedforward and intelligent controllers are working in parallel as illustrated in Fig. \ref{fig_CEL}. The total control law for a CLELC algorithm is formulated as follows:
\begin{equation} \label{eq_new_controllaw}
u = b^{-1}(\textbf{x}) \Big( -a(\textbf{x}) + u_{b} + u_{f} + u_{n}  \Big)
\end{equation}
where $u_{b}$ and $u_{f}$ are respectively the traditional feedback and feedforward control inputs, and $u_{n}$ is the output of the intelligent controller. The total control input is formulated as follows:
\begin{equation} \label{eq_totalcontrollaw}
u_{t} = u_{b} + u_{f} + u_{n} 
\end{equation}
The details of the intelligent controller consisting of an adaptive neuro-fuzzy structure and a sliding mode learning algorithm are given in the next subsections.
\begin{figure}[t!]
  \centering
  \includegraphics[width=1.0\columnwidth]{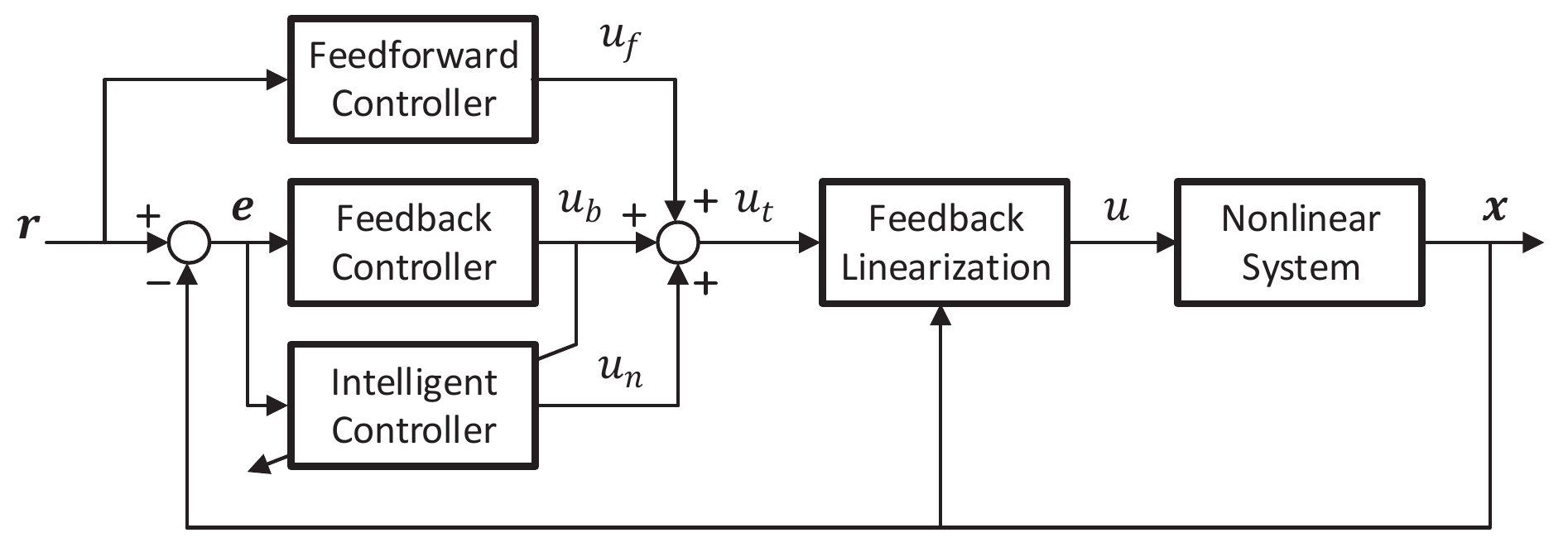}
  \caption{Closed-loop error learning control scheme.}\label{fig_CEL}
\end{figure}

\subsection{Adaptive Neuro-Fuzzy Structure}\label{sec_anfs}

The fuzzy \emph{if-then} rule of a zeroth-order Takagi-Sugeno-Kang model with $I$ input variables where the consequent part is a crisp number can be defined as follows:
\begin{equation}
R_{r}: \;\; \textrm{If} \; e_1 \; \textrm{is} \; A_{1k} \; \hdots \; \textrm{and} \; e_i \; \textrm{is} \; A_{ik}, \;\; \textrm{then} \;\; f_{r}=d_{r}
\end{equation}
where $e_{i} (i=1,\hdots, I)$ are the inputs of the Takagi-Sugeno-Kang model, $A_{ik}$ is the $k$th fuzzy membership function $(k=1,\hdots,K)$ corresponding to the input $i$th input variable. $K$ is the number of membership functions for the $i$th input. The consequent part of the rules $f_{r} (r=1,\dots,N)$ is the output function.

In adaptive neuro-fuzzy structure, the crisp inputs $e_i$ are mapped into fuzzified values using membership functions \cite{AYAS201744}. The membership values for the inputs  $\mu_{ik}(e_i)$ are defined by using Gaussian membership functions as below:
\begin{equation}\label{eq_mu}
\mu_{ik}(e_i) =\exp \Bigg(- \frac{1}{2}\Big(\frac{e_i - c_{ik}}{\sigma _{ik}} \Big)^2 \Bigg)  
\end{equation}
where $c_{ik}$ and $\sigma_{ik}$ are respectively the mean and standard deviation of the membership functions to be adjusted.

The firing strengths of the rules $w_{r}$ are calculated by using prod-t norm operator as follows:
\begin{equation}\label{eq_wr}
 w_{r} = \mu_{A1}(e_1) \ast \mu_{A2}(e_2) \ast \hdots  \ast \mu_{AI}(e_I)
\end{equation}
The prod-t norm operator is preferred due to the simplicity for the stability analysis and the normalized firing strengths of the rules are computed as below:

\begin{equation}\label{eq_wrwidetilde}
 \widetilde{w}_{r} = \frac{w_{r}}{\sum\limits_{r=1}^{N}{w}_{r}} 
\end{equation}

The output signal of the neuro-fuzzy structure $u_n$ is computed as follows:
\begin{equation}\label{eq_un}
u_n(t) = \sum\limits_{r=1}^{N} \widetilde{w}_{r}f_{r} 
\end{equation}

The following vectors can be specified: $\widetilde{\textbf{w}}=\left[\widetilde{w_{1}} \; \widetilde{w_{2}} \; ...\; \widetilde{w_{N}} \right]^{T} $, and $\textbf{f}=\left[ f_{1} \; f_{2} \; ...\; f_{N} \right] $.

\subsection{Sliding Mode Learning Algorithm}\label{sec_smla}

A novel sliding mode learning algorithm for the adaptations of the neuro-fuzzy structure is designed to ensure the system stability in this paper. In contradistinction to previous studies, the closed-loop error dynamics of a feedback linearizable system are defined as the sliding surface for the sliding mode learning algorithm as below:
\begin{eqnarray}\label{eq_ss}
s &=& \Big(\frac{d}{dt}  + \lambda \Big)^n e_{1} \nonumber  \\
&=& \dot{e}_{n} + \frac{n \lambda}{1!} e_{n} + \frac{n (n-1)} {{2!}} \lambda^{2} e_{n-1} + \hdots + \lambda^{n} e_{1} 
\end{eqnarray}
where $n$ is the order of a feedback linearizable system and $\lambda$ is the slope of the sliding surface. The condition $s=0$ guarantees that the error is zero when the system is on the sliding surface, and the $n$th order system is stable. 

The sliding surface is rewritten by taking the feedback control law \eqref{eq_fb} into consideration as follows:
\begin{equation}\label{eq_ss2}
s = \dot{e}_{n} + u_{b} 
\end{equation} 
where the controller gain vector in the feedback control law is defined as 
\begin{equation}\label{eq_ks}
\textbf{k}= [k_{1}, \hdots, k_{n-1}, k_{n}]=\Big[\lambda^{n}, \hdots,  \frac{n (n-1)} {{2!}} \lambda^{2}, \frac{n \lambda}{1!} \Big]
\end{equation}

\begin{assumption}\label{ass_1}
The total input rate  and the second time-derivative of state $x_{n}$ are assumed to be bounded:
\begin{equation}\label{}
\mid \dot{u}_{t} \mid < B_{\dot{u}_{t}} , \quad \textrm{and}  \quad  \mid \ddot{x}_{n} \mid < B_{\ddot{x}_{n}}
\end{equation}
where $B_{\dot{u}_{t}} $ and $B_{\ddot{x}_{n}} $ are considered as positive constants. Moreover, the learning rate $\alpha$ is assumed to be selected larger than the sum of these positive constants
\begin{equation}\label{}
\alpha > B_{\dot{u}_{t}} +  B_{\ddot{x}_{n}} 
\end{equation}
\end{assumption}

In the light of Assumption \ref{ass_1}, since the second time-derivative of $x_{n}$ is bounded, the second time-derivative of $r_{n}$ must also be bounded. 

\begin{theorem}\label{theorem1}
The parameters adaptation laws of the neuro-fuzzy structure are given by the following equations:
\begin{equation} \label{eq_c_ik}
\dot{c}_{ik} = \dot{e}_{i} + (e_{i} - c_{ik}) \alpha \textrm{sgn}\left( s \right)
\end{equation}
\begin{equation}\label{eq_sigma_ik}
\dot{\sigma}_{ik} = - \bigg( \sigma_{ik} + \frac{ (\sigma_{ik} )^3}{(e_{i} - c_{ik})^2} \bigg) \alpha \textrm{sgn}\left( s \right)
\end{equation}

\begin{equation} \label{eq_fr}
\dot{f}_{r} =\frac{\widetilde{\textbf{w}}_{r} }{\widetilde{\textbf{w}}^{T}_{r} \widetilde{\textbf{w}} _{r}} \alpha \textrm{sgn}\left( s \right)
\end{equation}
where $\alpha$ is a sufficiently positive constant as stated in Assumption \ref{ass_1}. This ensures that (i) the feedback control action approaches to zero for a given arbitrary initial condition $u_{b}(0)$ and (ii) $\dot{x}_{n}$ can track $\dot{r}_{n}$ so that $\dot{e}_{n}$ approaches to zero for a given arbitrary initial condition $\dot{e}_{n}(0)$.
\end{theorem}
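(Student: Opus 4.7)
The plan is to use $V = \tfrac{1}{2}s^{2}$ as a Lyapunov candidate and to establish the classical sliding-mode reachability condition $s\dot{s} \leq -\eta |s|$ for some $\eta > 0$. This drives $s \to 0$ in finite time; once on the surface \eqref{eq_ss}, the defining equation reduces to $(d/dt + \lambda)^{n} e_{1} = 0$, and since the gain vector \eqref{eq_ks} is exactly the binomial expansion of $(D + \lambda)^{n}$, the characteristic polynomial has the single stable root $-\lambda$ with multiplicity $n$. Hence $e_{1},\dots,e_{n} \to 0$ exponentially, which produces $u_{b} = \textbf{k}\textbf{e} \to 0$ (claim (i)) and, through the surface identity $\dot{e}_{n} = -u_{b}$, $\dot{e}_{n} \to 0$ (claim (ii)). The whole argument therefore rests on obtaining a tractable expression for $\dot{s}$.

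The decisive step is to show that, under the adaptation laws \eqref{eq_c_ik}--\eqref{eq_fr}, the intelligent-controller output satisfies the clean identity $\dot{u}_{n} = \alpha\,\textrm{sgn}(s)$. I would differentiate $u_{n} = \widetilde{\textbf{w}}^{T}\textbf{f}$ via the chain rule and expand $\dot{\mu}_{ik}$ from \eqref{eq_mu}. Because $\partial\mu_{ik}/\partial c_{ik}$ and $\partial\mu_{ik}/\partial e_{i}$ differ only by sign, the $\dot{e}_{i}$ injection inside \eqref{eq_c_ik} cancels the $\dot{e}_{i}$ contribution from $\partial\mu_{ik}/\partial e_{i}$; the residual $(e_{i} - c_{ik})^{2}/\sigma_{ik}^{2}$ factor left from $\dot{c}_{ik}$ is cancelled in turn by the first piece of $(\partial\mu_{ik}/\partial\sigma_{ik})\,\dot{\sigma}_{ik}$, which is precisely why \eqref{eq_sigma_ik} carries the bracket $\sigma_{ik} + \sigma_{ik}^{3}/(e_{i} - c_{ik})^{2}$. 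What remains is the rule-independent relation $\dot{\mu}_{ik} = -\mu_{ik}\alpha\,\textrm{sgn}(s)$. Propagating this through the prod-t-norm \eqref{eq_wr} makes $\dot{w}_{r}/w_{r}$ independent of $r$, so the normalization \eqref{eq_wrwidetilde} forces $\dot{\widetilde{w}}_{r} \equiv 0$; the surviving term $\widetilde{\textbf{w}}^{T}\dot{\textbf{f}}$ then collapses by \eqref{eq_fr} to exactly $\alpha\,\textrm{sgn}(s)$.

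With $\dot{u}_{n}$ in hand I would expand $\dot{s} = \ddot{e}_{n} + \dot{u}_{b}$, use $\dot{u}_{t} = \dot{u}_{b} + \dot{u}_{f} + \dot{u}_{n}$ together with $u_{f} = \dot{r}_{n}$ to eliminate $\dot{u}_{b}$, and observe that the $\ddot{r}_{n}$ terms cancel, leaving $\dot{s} = -\ddot{x}_{n} + \dot{u}_{t} - \alpha\,\textrm{sgn}(s)$. Multiplying by $s$ and invoking Assumption \ref{ass_1} yields $s\dot{s} \leq (B_{\ddot{x}_{n}} + B_{\dot{u}_{t}} - \alpha)|s|$, which is strictly negative by the hypothesis on $\alpha$, closing the reachability step. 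The hard part is unquestionably the intermediate cancellation: the three adaptation laws \eqref{eq_c_ik}--\eqref{eq_fr} look structurally ad hoc until one traces how they conspire to annihilate every $e_{i}$-, $c_{ik}$-, $\sigma_{ik}$- and rule-dependent factor and leave only the single sign-function term that the Lyapunov bound requires. Once that identity is confirmed, the reachability argument and the post-sliding linear analysis are textbook.
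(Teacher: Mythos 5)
Your proposal is correct and takes essentially the same route as the paper: your Lyapunov candidate $V=\tfrac{1}{2}s^{2}$ is identical to the paper's $V=\tfrac{1}{2}\left(u_{b}+\dot{e}_{n}\right)^{2}$ by \eqref{eq_ss2}, and your central identity $\dot{u}_{n}=\alpha\,\textrm{sgn}(s)$ --- obtained through exactly the cancellations the paper carries out in \eqref{eq_dotmu}--\eqref{eq_dotun5}, i.e., the $\dot{e}_{i}$ cancellation from \eqref{eq_c_ik}, the variance-law cancellation from \eqref{eq_sigma_ik} giving $\dot{\mu}_{ik}=-\mu_{ik}\alpha\,\textrm{sgn}(s)$, the rule-independence of $\dot{w}_{r}/w_{r}$ forcing $\dot{\widetilde{w}}_{r}=0$, and the collapse of $\widetilde{\textbf{w}}^{T}\dot{\textbf{f}}$ via \eqref{eq_fr} --- leads to the same bound $\dot{V}\leq \mid s \mid \left(B_{\dot{u}_{t}}+B_{\ddot{x}_{n}}-\alpha\right)<0$ as the paper's \eqref{eq_dotV5}. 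Your added post-reachability step (on $s=0$, the characteristic polynomial $(D+\lambda)^{n}$ gives exponential decay of $\textbf{e}$, hence $u_{b}=\textbf{k}\textbf{e}\to 0$ and $\dot{e}_{n}=-u_{b}\to 0$) is a modest strengthening of the paper's conclusion, which stops at $\dot{V}<0$ and asserts claims (i) and (ii) directly.
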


\begin{proof}
The time derivative of Gaussian membership function \eqref{eq_mu} is written as follows:
\begin{equation}\label{eq_dotmu}
\dot{\mu}_{ik}(e_i) = -M_{ik}\dot{M}_{ik} \mu_{ik}(e_i)  
 \end{equation}
where
\begin{eqnarray}\label{eq_A}
M_{ik}=\frac{e_i - c_{ik}}{\sigma_{ik}}, \quad 
\dot{M}_{ik}=\frac{(\dot{e}_i - \dot{c}_{ik})\sigma_{ik} -(e_i - c_{ik})\dot{\sigma}_{ik}}{\sigma_{ik} ^{2}}
\end{eqnarray}

Combination of \eqref{eq_c_ik}, \eqref{eq_sigma_ik}, and \eqref{eq_A} gives:
\begin{equation}\label{eq_Kr}
K_{r} = \sum\limits_{i=1}^{I} M_{ik}\dot{M}_{ik}=I \alpha \textrm{sgn}\left( s \right)
\end{equation}

The time derivative of \eqref{eq_wr} is written as follows:
\begin{equation}\label{eq_wrdot}
\dot{w}_{r} = - K_{r} w_{r} 
\end{equation}

The time derivative of \eqref{eq_wrwidetilde} is written considering \eqref{eq_wrdot} as follows:
\begin{equation}\label{eq_dotwr}
\dot{\widetilde{w}}_{r} = - \widetilde{w}_{r} K_{r} +\widetilde{w}_{r} \sum\limits_{r=1}^{N}\widetilde{w}_{r} K_{r}
\end{equation}

The stability of the sliding mode learning algorithm is investigated by the following Lyapunov function below:
\begin{equation}\label{eq_V}
V=\frac{1}{2} \left(u _{b} + \dot{e}_{n} \right)^{2}
\end{equation}

The time derivative of the proposed Lyapunov function \eqref{eq_V} is computed below:
\begin{equation}\label{eq_dotV}
\dot{V}= ( u _{b} + \dot{e}_{n}  ) ( \dot{u} _{b} + \ddot{e}_{n}  )\nonumber \\
\end{equation}
Taking \eqref{eq_totalcontrollaw} into consideration, $\dot{u}_{b}$ is inserted into the equation above. Then, it is re-organized consideration \eqref{eq_ss2} as follows:
\begin{equation}\label{eq_dotVV}
\dot{V}=  s (\dot{u}_{t} - \dot{u}_{f} -\dot{u} _{n} + \ddot{e}_{n}  ) 
\end{equation}

The time derivative of the output of the neuro-fuzzy structure \eqref{eq_un} is calculated as follows:
\begin{equation}\label{eq_dotun}
\dot{u} _{n}= \sum\limits_{r=1}^{N} \dot{f}_{r} \widetilde{w}_{r} + f_{r} \dot{\widetilde{w}}_{r}
\end{equation}
If \eqref{eq_dotwr} is inserted into the equation above, \eqref{eq_dotun} can be obtained:
\begin{equation}\label{eq_dotun2}
\dot{u} _{n} = \sum\limits_{i=1}^{N} \bigg\{\dot{f}_{r} \widetilde{w}_{r} + f_{r} \Big(- \widetilde{w}_{r} K_{r} + \widetilde{w}_{r} \sum\limits_{i=1}^{N} \widetilde{w}_{r} K_{r} \Big) \bigg\}
\end{equation}
If \eqref{eq_Kr} is inserted into the aforementioned equation, \eqref{eq_dotun2} can be obtained:
\begin{equation}\label{eq_dotun3}
\dot{u} _{n}  = \sum\limits_{r=1}^{N} \bigg\{ \dot{f}_{r} \widetilde{w}_{r} + f_{r} I \alpha \textrm{sgn}\left( s \right)  \Big(- \widetilde{w}_{r}  +
 \widetilde{w}_{r} \sum\limits_{r=1}^{N} \widetilde{w}_{r} \Big) \bigg\}
\end{equation}

The aforementioned equation is rewritten by taking the definition $ \sum\limits_{i=1}^{N}\widetilde{w}_{r} = 1$ into account as follows:
\begin{equation}\label{eq_dotun4}
\dot{u} _{n} = \sum\limits_{r=1}^{N} \dot{f}_{r} \widetilde{w}_{r}
\end{equation}

By using \eqref{eq_fr},  \eqref{eq_dotun4} is obtained as follows:
\begin{equation}\label{eq_dotun5}
\dot{u} _{n} = \alpha \textrm{sgn}\left( s \right)
\end{equation}
Substitution of \eqref{eq_dotun5} into \eqref{eq_dotV} gives:
\begin{equation}\label{eq_dotV2}
\dot{V} = s \Big[\dot{u}_{t} - \dot{u}_{f} -  \alpha \textrm{sgn}\left( s \right) + \ddot{e}_{n} \Big] 
  \end{equation}
If the aforementioned equation is re-arranged considering $\ddot{e}_{n}=\ddot{r}_{n}-\ddot{x}_{n}=\dot{u}_{f}-\ddot{x}_{n}$
\begin{equation}\label{eq_dotV3}
\dot{V} = s \Big[\dot{u}_{t} -  \alpha \textrm{sgn}\left( s \right) -\ddot{x}_{n} \Big]  
  \end{equation}
As remarked in Assumption \ref{ass_1}, if it is assumed that the total input rate $\dot{u}_{t}$ and the second time-derivative of state $\ddot{x}_{n}$ are upper bounded, it is obtained as follows:
\begin{equation}\label{eq_dotV5}
\dot{V} \leq \mid s  \mid  (B_{\dot{u}_{t}} + B_{\ddot{x}_{n}} - \alpha  )
  \end{equation}
If the learning rate $\alpha$ is large enough, i.e., $\alpha \geq  B_{\dot{u}_{t}} +  B_{\ddot{x}_{n}}$, as stated in Theorem \ref{theorem1}, the time derivative of the Lyapunov function yields
\begin{equation}\label{eq_dotV6}
\dot{V} <  0
  \end{equation}
So that the sliding mode learning algorithm is stable, the feedback control action $u_{b}$ approaches to zero and $\dot{x}_{n}$ can track $\dot{r}_{n}$. This implies that the intelligent controller is capable of learning the system behavior.
\end{proof}

\begin{remark}
Sliding mode learning algorithm learns the system behaviour very fast so that selection of initial conditions for the adaptation laws \eqref{eq_c_ik}-\eqref{eq_fr} does not play a very critical role in system performance. However, if the initial conditions for the adaptation laws are very large, then parameter boosting might occur and cause instability of the training algorithm.
\end{remark}

\subsection{System Stability Analysis}\label{sec_overalstability}

\begin{assumption}\label{ass_dotDelta}
The rate of the uncertainty term in the system dynamics \eqref{eq_nonlinearsystem} is assumed to be bounded:
\begin{equation}\label{eq_bounds}
\mid \dot{\Delta}(\textbf{x},u) \mid < B_{\dot{\Delta}(\textbf{x},u)}
\end{equation}
where $B_{\dot{\Delta}(\textbf{x},u)}$ is a positive constant. 
\end{assumption}

\begin{theorem}[System stabiltiy]\label{theorem2}
The CLELC algorithm \eqref{eq_new_controllaw} is applied to the feedback linearizable system \eqref{eq_nonlinearsystem}. If the learning rate $\alpha$ is large enough as stated in Assumption \ref{ass_dotDelta}, i.e., $\alpha >  B_{\dot{\Delta}(\textbf{x},u)}$, the sliding surface $s$ approaches to zero for an arbitrary initial condition $s(0)$ within finite time $t_{h}$, which is estimated as
\begin{equation} \label{eq_th2}
t_{h} \leq \frac{ \mid s(0) \mid}{\alpha -  B_{\dot{\Delta}(\textbf{x},u) }}
\end{equation}
This purports that the sliding surface $s(t)$ approaches to zero for all $t > t_{h}$. 
\end{theorem}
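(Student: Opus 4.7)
The plan is to take $V_s = \tfrac{1}{2}s^{2}$ as a Lyapunov candidate for the sliding surface itself and show that, once the CLELC law is substituted into the plant, $\dot{s}$ reduces to a clean sign-plus-disturbance structure that yields the standard sliding-mode finite-time reaching bound.

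First, I would close the loop explicitly. Substituting the control law \eqref{eq_new_controllaw} into the plant \eqref{eq_nonlinearsystem} gives $\dot{x}_{n}=u_{b}+u_{f}+u_{n}+\Delta(\mathbf{x},u)$. Because $u_{f}=\dot{r}_{n}$ by \eqref{eq_ff}, we have $\dot{e}_{n}=\dot{r}_{n}-\dot{x}_{n}=-u_{b}-u_{n}-\Delta(\mathbf{x},u)$. Inserting this into the rewritten sliding surface \eqref{eq_ss2} collapses it to the very simple form
\begin{equation*}
s \;=\; \dot{e}_{n}+u_{b} \;=\; -u_{n}-\Delta(\mathbf{x},u).
\end{equation*}
This identity is the cornerstone of the argument and, in my view, the one step that is easy to miss: once established, the dynamics of the surface are dictated only by the intelligent controller and the uncertainty.

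Second, I would differentiate, obtaining $\dot{s}=-\dot{u}_{n}-\dot{\Delta}(\mathbf{x},u)$, and then invoke \eqref{eq_dotun5} from the proof of Theorem \ref{theorem1}, which already established $\dot{u}_{n}=\alpha\,\mathrm{sgn}(s)$ from the adaptation laws \eqref{eq_c_ik}--\eqref{eq_fr}. This yields $\dot{s}=-\alpha\,\mathrm{sgn}(s)-\dot{\Delta}(\mathbf{x},u)$. Differentiating $V_{s}=\tfrac{1}{2}s^{2}$ and using Assumption \ref{ass_dotDelta},
\begin{equation*}
\dot{V}_{s} \;=\; s\dot{s} \;=\; -\alpha|s|-s\,\dot{\Delta}(\mathbf{x},u) \;\leq\; -\bigl(\alpha-B_{\dot{\Delta}(\mathbf{x},u)}\bigr)|s|,
\end{equation*}
which is strictly negative whenever $\alpha>B_{\dot{\Delta}(\mathbf{x},u)}$ and $s\neq0$, establishing attractivity of the sliding manifold $s=0$.

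Third, for the finite-time bound, I would read the last inequality as $|s|\,\tfrac{d|s|}{dt}\leq-(\alpha-B_{\dot{\Delta}(\mathbf{x},u)})|s|$, i.e.\ $\tfrac{d|s|}{dt}\leq-(\alpha-B_{\dot{\Delta}(\mathbf{x},u)})$ wherever $s\neq0$, and integrate from $0$ to the first hitting time $t_{h}$. This gives $0-|s(0)|\leq-(\alpha-B_{\dot{\Delta}(\mathbf{x},u)})\,t_{h}$, i.e.\ the claimed bound \eqref{eq_th2}. Invariance of $s=0$ for $t>t_{h}$ follows from the same $\dot{V}_{s}\leq 0$ inequality once the surface has been reached.

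The only real obstacle is bookkeeping in the first step: one must combine \eqref{eq_new_controllaw}, the definitions of $u_{b}$, $u_{f}$ in \eqref{eq_fb}--\eqref{eq_ff}, and the collapsed surface \eqref{eq_ss2} consistently so that the extraneous $u_{b}$ and $u_{f}$ terms cancel and leave $s=-u_{n}-\Delta$. After that, everything is the textbook first-order sliding-mode reaching argument, and the assumption $\alpha>B_{\dot{\Delta}(\mathbf{x},u)}$ appears naturally and tightly in the denominator of $t_{h}$.
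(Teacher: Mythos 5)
Your proposal is correct and takes essentially the same route as the paper's own proof: the same Lyapunov function $V=\tfrac{1}{2}s^{2}$, the same closed-loop reduction $\dot{e}_{n}=-u_{b}-u_{n}-\Delta(\mathbf{x},u)$ giving $\dot{s}=-\alpha\,\mathrm{sgn}(s)-\dot{\Delta}(\mathbf{x},u)$ via $\dot{u}_{n}=\alpha\,\mathrm{sgn}(s)$ from Theorem~\ref{theorem1}, and the same bound under Assumption~\ref{ass_dotDelta}. Your finite-time step via the differential inequality $\tfrac{d|s|}{dt}\leq-\bigl(\alpha-B_{\dot{\Delta}(\mathbf{x},u)}\bigr)$ is merely a slightly tidier packaging of the paper's explicit integration of $\dot{s}$ over $[0,t_{h}]$ under $\mathrm{sgn}(s(\xi))=\mathrm{sgn}(s(0))$, not a genuinely different argument.
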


\begin{proof}
The stability of an $n$th order feedback linearizable system is checked by following  Lyapunov function:
\begin{equation}\label{eq_lyapunov}
V = \frac{1}{2} s^{2}  
\end{equation}
The time derivative of the Lyapunov function is calculated as follows: 
\begin{equation}\label{eq_lyapunovrate}
\dot{V} =   s \dot{s} 
\end{equation}
The sliding surface rate $\dot{s}$ is obtained by taking \eqref{eq_ss2} into account. Then, it is inserted into \eqref{eq_lyapunovrate} as follows:
\begin{equation}\label{eq_lyapunovratee}
\dot{V} =  s ( \ddot{e}_{n} + \dot{u} _{b} )
\end{equation}

If the control law for a CLELC algorithm \eqref{eq_new_controllaw} is applied to the feedback linearizable system with uncertainties \eqref{eq_nonlinearsystem}, it is obtained
\begin{equation}\label{eq_xn}
\dot{x}_{n} = u_{b} + u_{f} + u_{n}  + \Delta (\textbf{x},u) 
\end{equation}
Then, the closed-loop error dynamics are obtained by inserting the feedforward control action \eqref{eq_ff}
\begin{equation}\label{eq_closedloop}
\dot{e}_{n} = - u_{b} - u_{n} - \Delta (\textbf{x},u) 
\end{equation}
Time derivative of \eqref{eq_closedloop} is taken and \eqref{eq_lyapunovratee} is rewritten as below:
\begin{equation}\label{eq_lyapunovrate2}
\dot{V} = s \big(  -\dot{u}_{n} - \dot{\Delta}(\textbf{x},u) \big) 
\end{equation}

If the time derivative of the neuro-fuzzy output \eqref{eq_dotun5} is inserted into the aforementioned equation, it is computed below:
\begin{equation}\label{eq_lyapunovrate3}
\dot{V} = s \big( - \alpha \textrm{sgn}\left( s \right) - \dot{\Delta}(\textbf{x},u) \big) 
\end{equation}
As stated in Assumption \ref{ass_dotDelta}, the rate of the uncertainty term is upper bounded
\begin{equation}\label{eq_lyapunovrate2}
\dot{V}  \leq  \mid s \mid  \big( -  \alpha + B_{\dot{\Delta}(\textbf{x},u)} \big)
\end{equation}
If the learning rate $\alpha$ is large enough, i.e., $\alpha >B_{\dot{\Delta}(\textbf{x},u)}$, as remarked in Theorem \ref{theorem2}, the time derivative of the Lyapunov function yields
\begin{equation}
 \dot{V}< 0
 \end{equation}
so that the sliding surface approaches to zero in a stable manner so that the closed-loop system is stable.

It is possible now to be shown that such a convergence takes place in finite time for an arbitrary initial condition. The sliding surface rate is satisfied by the sliding surface trajectory $s(t)$ is as below:
\begin{equation}\label{eq_finitetime1}
\dot{s}(t) =  - \alpha \textrm{sgn}\left( s(t) \right) - \dot{\Delta}(\textbf{x},u) 
\end{equation}
the solution of the sliding surface rate with an initial condition $s(0)$ satisfies for any $t \leq t_{h}$
\begin{equation}\label{eq_finitetime2}
s(t) - s(0) = \int^{t}_{0} \dot{s}(\xi) d\xi =  \int^{t}_{0} \big(- \alpha \textrm{sgn}\left( s(\xi) \right) - \dot{\Delta}(\textbf{x},u) \big) d\xi
\end{equation}
the solution takes zero value at time $t=t_{h}$, i.e., $s(t_{h})=0$, and the arbitrary initial condition is inserted into the equation above, i.e., $\textrm{sgn}\left( s(\xi) \right) = \textrm{sgn}\left( s(0) \right)$, therefore
\begin{eqnarray}\label{eq_finitetime3}
\underbrace{s(t_{h}) }_{0}-s(0) &=&   \int^{t_{h}}_{0} \big(- \alpha \textrm{sgn}\left( s(0) \right) - \dot{\Delta}(\textbf{x},u) \big) d \xi \nonumber \\
-s(0) &=&   - \alpha \textrm{sgn}\left( s(0) \right) t_{h}  - \int^{t_{h}}_{0}  \dot{\Delta}(\textbf{x},u) d \xi 
\end{eqnarray}
Both sides of the aforementioned equation are multiplied by $-\textrm{sgn}\left( s(0) \right)$, it is obtained as below:
\begin{eqnarray}\label{eq_finitetime4}
\mid s(0) \mid  &=&   \alpha t_{h}  + \textrm{sgn}\left( s(0) \right) \int^{t_{h}}_{0}  \dot{\Delta}(\textbf{x},u) d \xi \nonumber \\
 & \geq &   \alpha t_{h} - B_{\dot{\Delta}(\textbf{x},u)} t_{h} 
\end{eqnarray}
A finite time $t_{h}$ is found as
\begin{equation}\label{eq_finitetime5}
 t_{h} \leq \frac{ \mid s(0) \mid }{  \alpha - B_{\dot{\Delta}(\textbf{x},u)} }
\end{equation}
This purports that a sliding motion is maintained on $s(t)=0$ $\forall t > t_{h}$.
\end{proof}

\begin{remark}
As stated in Theorems  \ref{theorem1} and \ref{theorem2}, the learning rate $\alpha$ must be  large enough to guarantee the stability of the learning algorithm and the system stability; however, it is to be noted that a larger $\alpha$ may cause the chattering effect. Therefore, it is so crucial to determine the learning rate $\alpha$. In this study, the $\textrm{sgn}$ functions are replaced by the following equation to decrease the chattering effect $\textrm{sgn} \left( s \right) = \frac{ s}{\mid s \mid +\delta}$ where $\delta=0.05$. It is important to state that the selection of $\delta$ is very critical and must be carried out by taking the learning rate $\alpha$ into account to reduce the chattering effect: (i) if $\delta$ is small, it may not be enough to reduce the chattering effect, (ii) if $\delta$ is large, it may cause steady-state error. Therefore, an appropriate value is selected considering anticipated disturbance in this paper.
\end{remark}

\section{Simulation Results}\label{sec_simresults}

The following third-order numerical system is considered for the simulation study:
\begin{eqnarray}\label{eq_3rdsystems}
  \left[
  \dot{x}_{1},   \dot{x}_{2},   \dot{x}_{3}
  \right]^{T}   =\left[ x_{2}, x_{3}, a(\textbf{x})+b(\textbf{x})u +  \Delta (\textbf{x},u)  \right]^{T}
\end{eqnarray}
where $a(\textbf{x})=-2 x_{1} - x_{2} - \sin{(x_{3})} + e^{x_{1}}$ and $b(\textbf{x})=1$. 

The simulation study runs for a total of $20$ s with a sampling time of $0.01$ s. The number of membership functions for each input to the adaptive neuro-fuzzy structure is set to $3$. The system states are initialized as $\textbf{x}(0)=[1, -1, -10]^{T}$. The system is a third order system; therefore, the sliding surface is obtained as $s = \dot{e}_{3} + 3 \lambda e_{3} + 3 \lambda^{2} e_{2} + \lambda^{3} e_{1}$. Moreover, the slope of the sliding surface $\lambda$ is selected as $3$; therefore, the sliding surface is obtained as $s= \dot{e}_{3} + 9 e_{3} + 27 e_{2} + 27 e_{1}$ and the controller gain vector is obtained as $\textbf{k}=[k_{1}, k_{2}, k_{3}]= [27, 27, 9]$ as formulated in \eqref{eq_ks}. The reference signal $r$ is equal to zero, i.e., $\textbf{r}=0$. The disturbance is imposed on the system at $10$ s and formulated as $\Delta (\textbf{x},u) =5 \sin{(t)}$. As remarked in Theorem \ref{theorem2}, the learning rate $\alpha$ must be larger than the maximum value of the uncertainty rate, i.e., $\alpha >B_{\dot{\Delta}(\textbf{x},u)}$ to guarantee the system stability. Therefore, the learning rate $\alpha$ is set to $25$, i.e., $\alpha=25 > B_{\dot{\Delta}(\textbf{x},u)}=5$.

The performance of the CLELC algorithm is shown in Figs. \ref{fig_x1}-\ref{fig_x3}. At the beginning of the simulation between $t=0-10$ s, the control performances of the FLC and CLELC algorithms are compared while there is no uncertainty in the system. The CLELC results in smaller rise and settling times, and overshoot than the traditional FLC in the absence of uncertainties. It can be concluded that the CLELC algorithm improves the transient response performance of the system. In next step, a sinusoidal disturbance $\Delta (\textbf{x},u) =5 \sin{(t)}$ is imposed on the system between $t=10-20$ s to evaluate the robustness of the CLELC algorithm against uncertainty. The traditional FLC is not robust against uncertainties after the disturbance is imposed on the system. The CLELC algorithm learns system behavior online by adjusting its parameters and controls the system without any steady-state error. Thanks to the CLELC scheme, the developed control algorithm results in robust control performance in the presence of uncertainties. 

The sliding surface of the sliding mode learning algorithm must converge to zero in finite time, which is proven in Theorem \ref{theorem2}. Taking \eqref{eq_finitetime5} into account, the finite time is found as $t_{h} \leq 4.5$ seconds, where $s(0)=-90$, $\alpha=25$ and $B_{\dot{\Delta}(\textbf{x},u)}=5$. As can be seen from Figs. \ref{fig_x1}-\ref{fig_x3}, the states converges to zero less than $4.5$ seconds, which confirms the formulation in \eqref{eq_finitetime5}.

The control signals for the CLELC algorithm are shown in Fig. \ref{fig_controlsignals}. The intelligent controller takes the overall control of the control signal in a very short duration; therefore, the feedback control input $u_{b}$ approaches to zero. This implies that the CLELC algorithm learns the system dynamics. Note that the feed-forward control input is set to zero throughout the simulation study since the reference is equal to zero. 

\begin{figure}[t!]
\centering
\subfigure[Responses of state $x_{1}$.]{
\includegraphics[width=0.46\columnwidth]{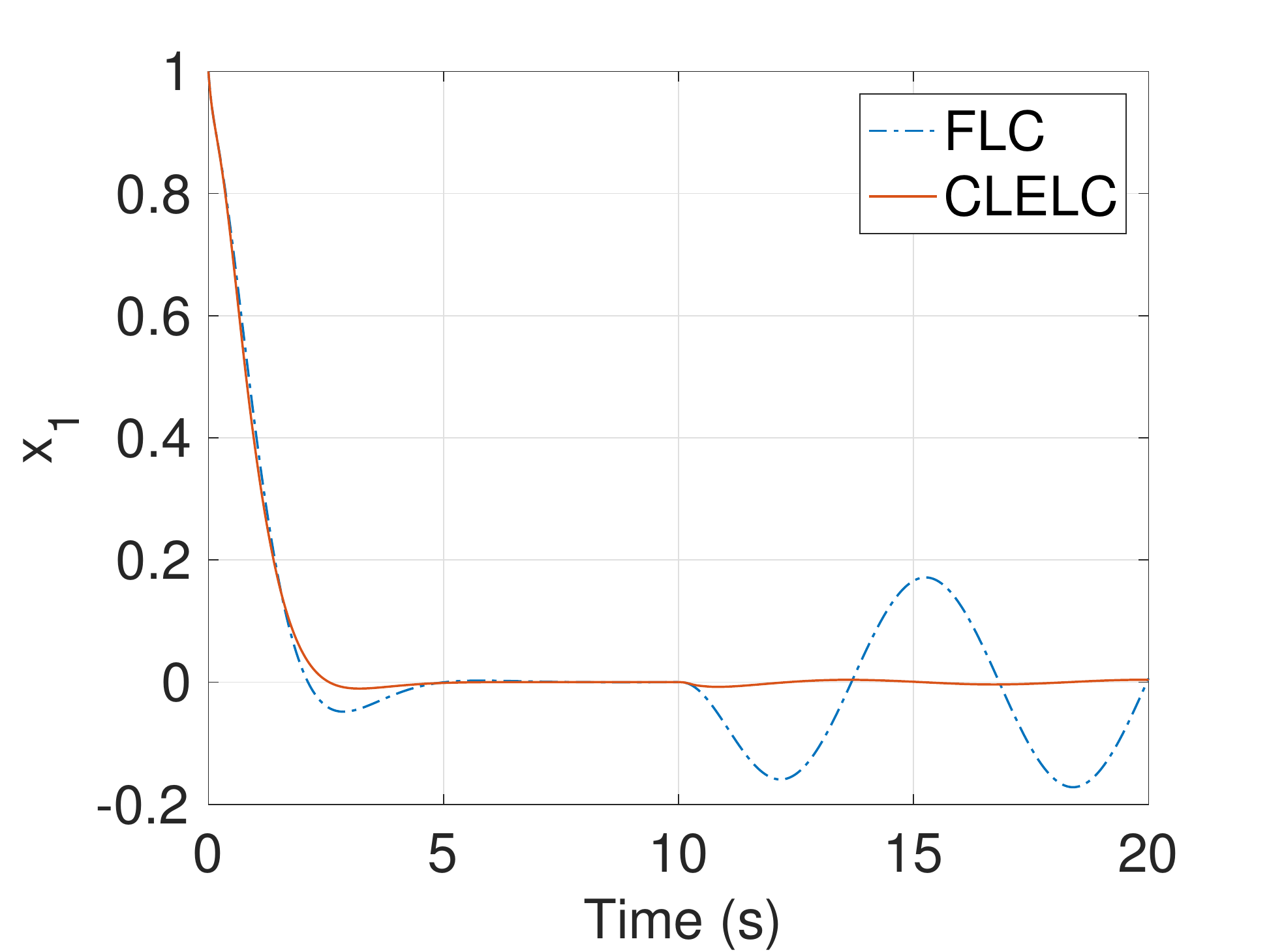}
\label{fig_x1}
}
\subfigure[Responses of state $x_{2}$. ]{
\includegraphics[width=0.46\columnwidth]{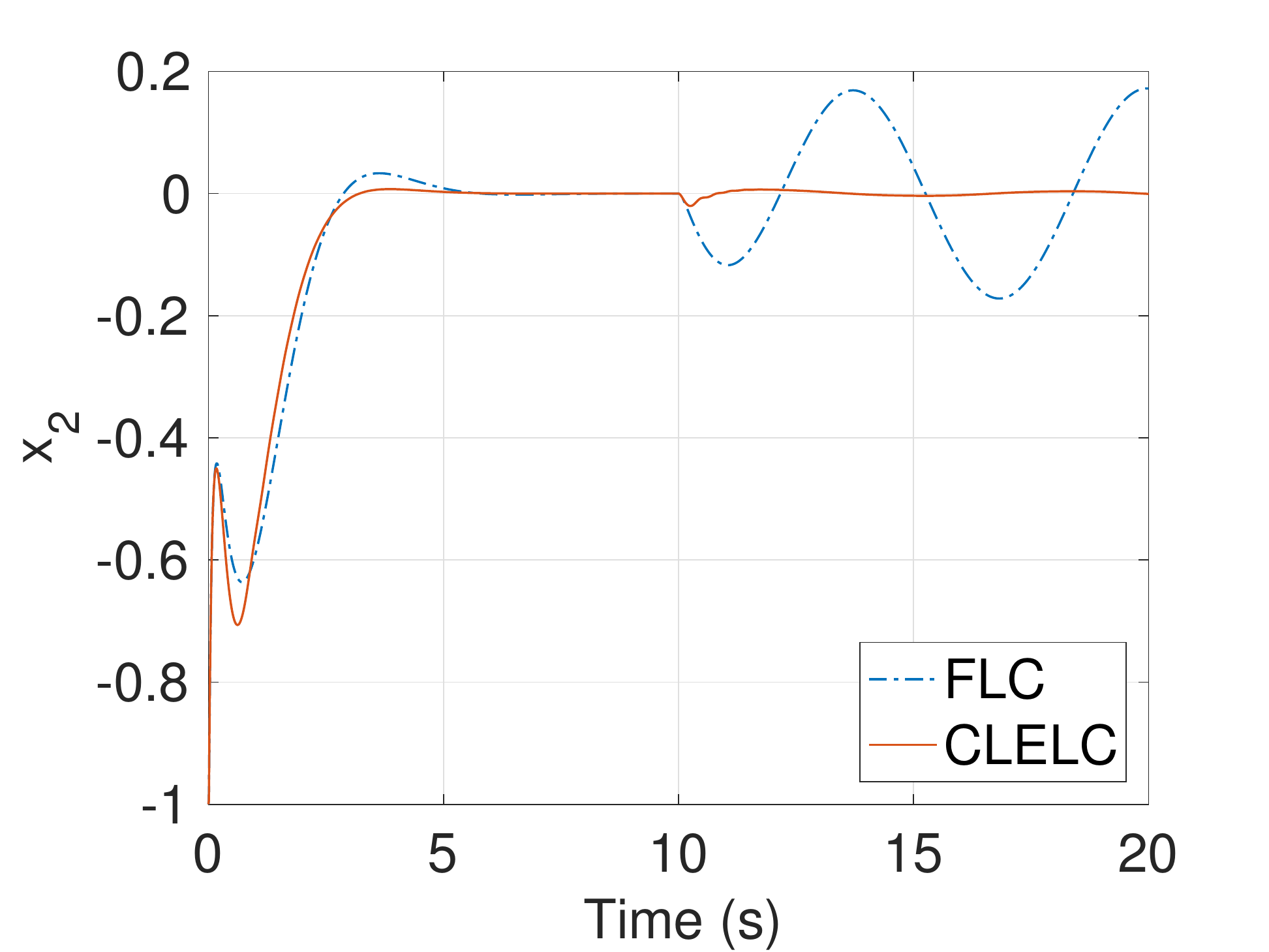}
\label{fig_x2}
}
\subfigure[Responses of state $x_{3}$.]{
\includegraphics[width=0.46\columnwidth]{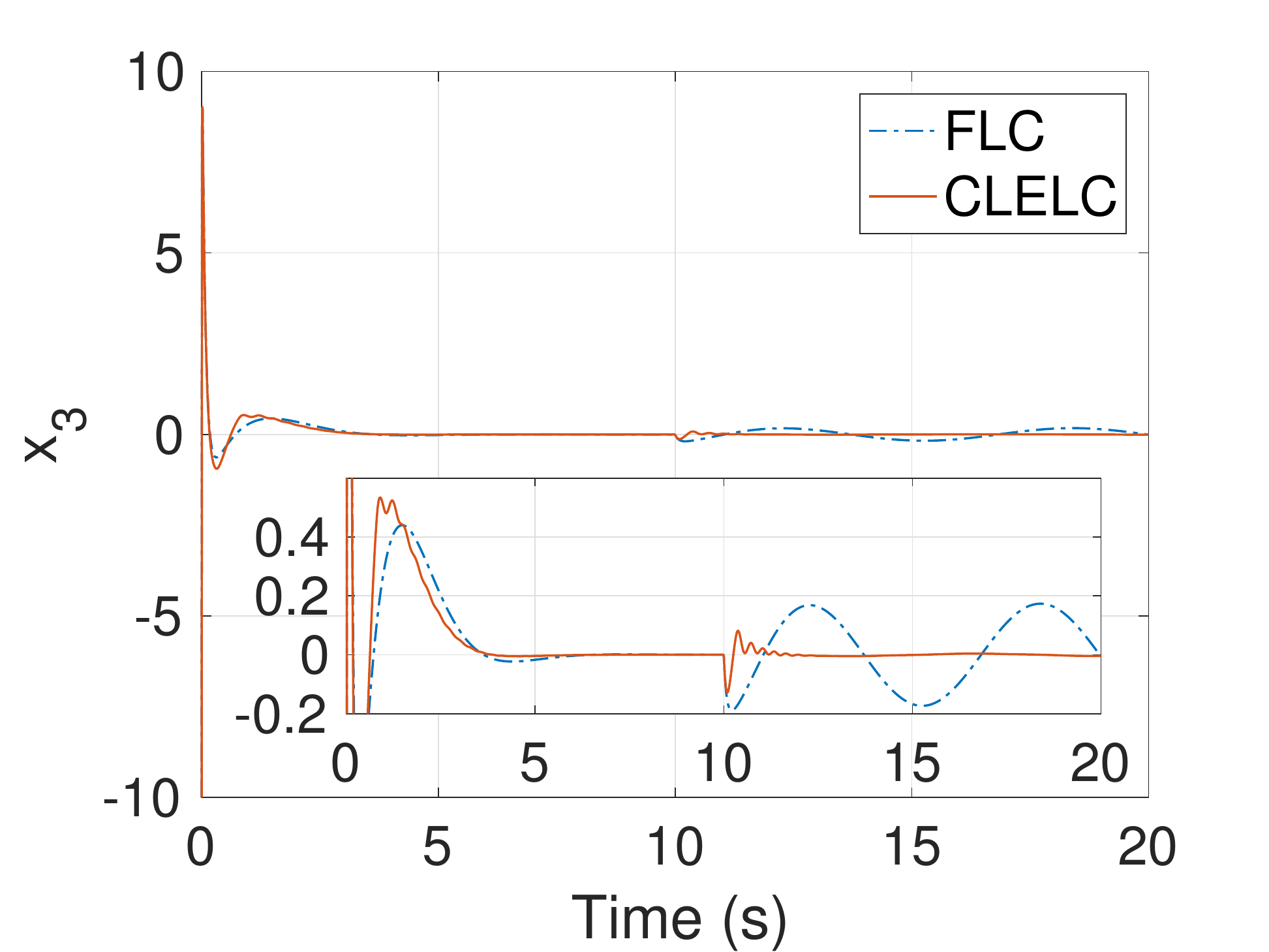}
\label{fig_x3}
}
\subfigure[Control signals.]{
\includegraphics[width=0.46\columnwidth]{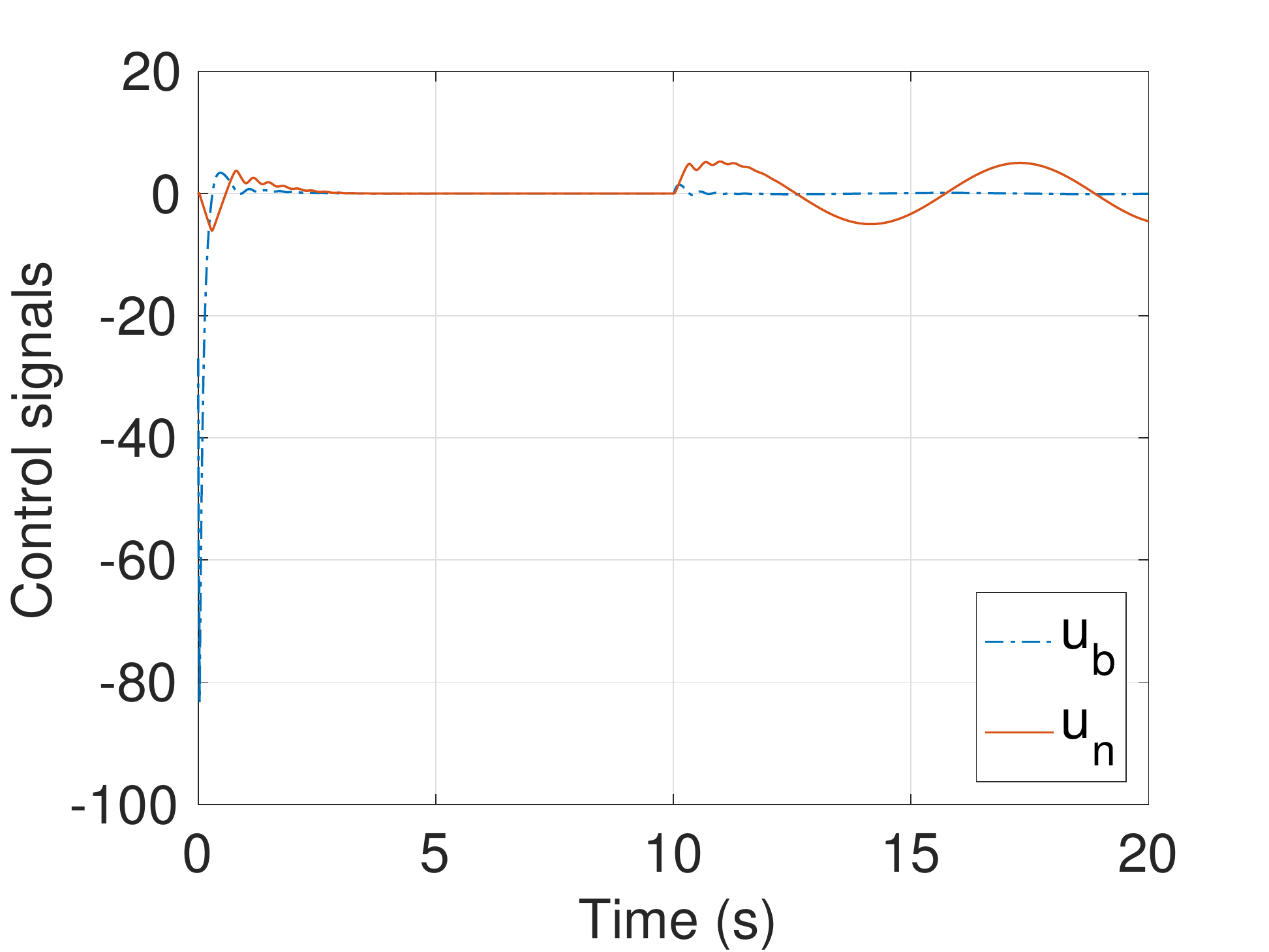}
\label{fig_controlsignals}
}
\caption[Optional caption for list of figures]{Simulation results.}
\label{fig_sim}
\end{figure}

\section{Experimental Validation}\label{sec_expval}

\subsection{Mobile Robot}\label{sec_mobilerobot}

\begin{figure}[b!]
  \centering
  \includegraphics[width=0.6\columnwidth]{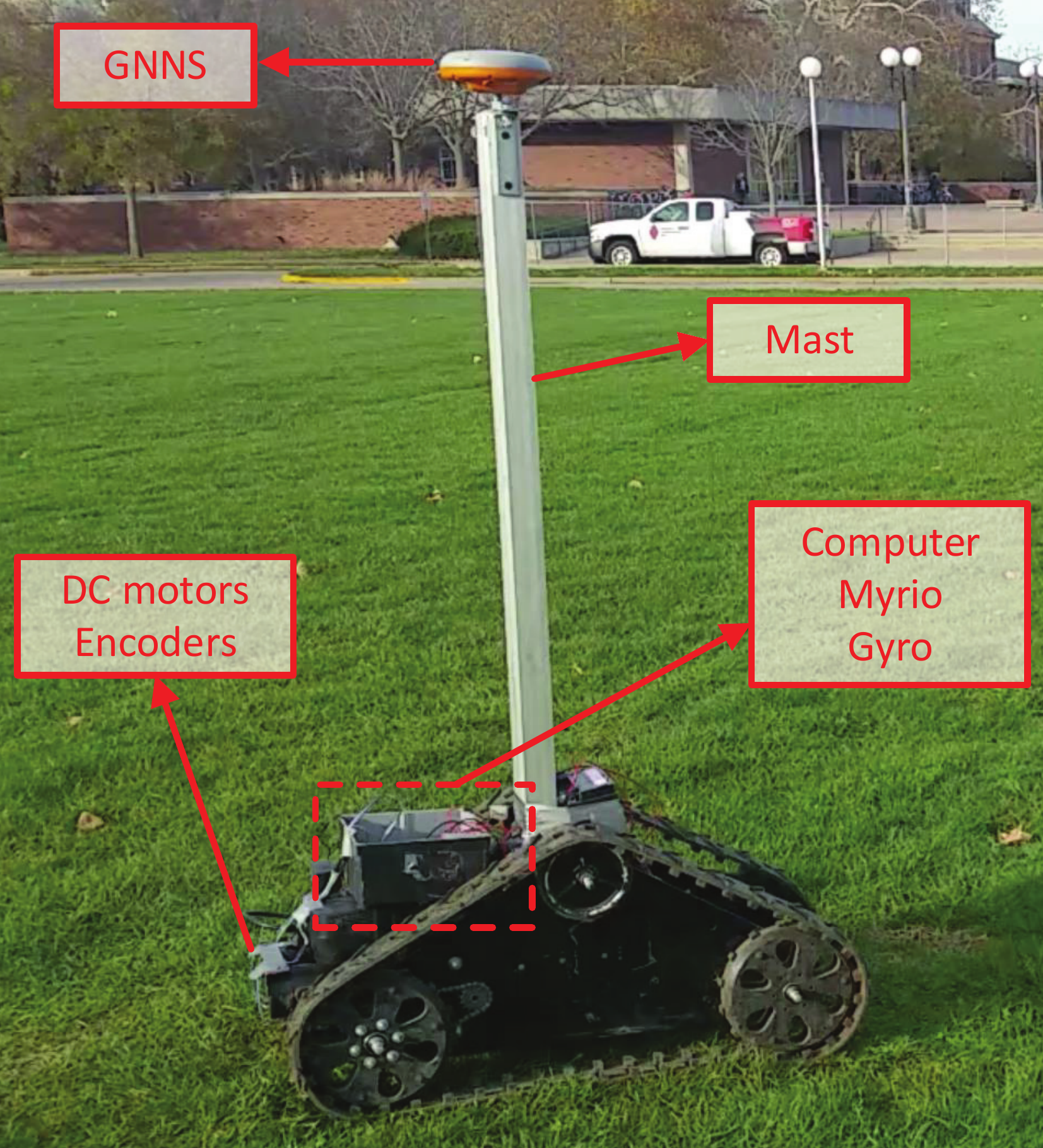}
  \caption{Mobile robot.}\label{fig_robot_labeled}
\end{figure}
The mobile robot shown in Fig. \ref{fig_robot_labeled} has been equipped with sensors. Plant imaging and phenotyping sensors, as well as a Real-Time Kinematic (RTK) differential Global Navigation Satellite System (GNSS) are mounted on the mast on the robot. A Septentrio Altus APS-NR3 GNSS (Septentrio Satellite Navigation NV, Belgium) has been employed to acquire the position of the mobile robot with an accuracy of $0.04$ m at a $5$-Hz sampling rate. More information about the actuators and sensors mounted on the mobile robot can be found in \cite{erkanjfr}.

The CLELC algorithm is implemented in LabVIEW$^{TM}$ (v2015, National Instrument Corporation, USA) and executed in real time on the onboard computer. It gathers the GNSS data, linear and angular velocities, and regulates the speed of the tracks by sending signals to the motion controller. The control input generated by the CLELC algorithm is sent to the motion controller via NI Myrio (National Instrument Corporation, USA), and linear and angular velocities measurements are received by the same way. Position measurements are received via Bluetooth. The sampling frequency of the CLELC algorithm is set to $5$-Hz.

\subsection{Feedback Linearization Control of a Mobile Robot}

The schematic diagram of the mobile robot is illustrated in Fig. \ref{fig_mobilerobot}. The mobile robot is a unicycle type robot and formulated as \cite{Orlolo2002, 7468507, 7562522}:
\begin{eqnarray}\label{eq_mobilerobotmodel}
 \left[
 \dot{p_{x}} ,  \dot{p_{y}} , 	\dot{\theta}
  \right]^{T}   =\left[ v \cos{(\theta)}, v \sin{(\theta)}, \omega 
  \right]^{T}
\end{eqnarray}
where $p_{x}$ and $p_{y}$ are respectively the position of the mobile robot in x- and y-axes, $\theta$ is the heading angle, $v$ is the linear velocity, and $\omega$ is the angular velocity. 

\begin{figure}[h!]
  \centering
  \includegraphics[width=0.6\columnwidth]{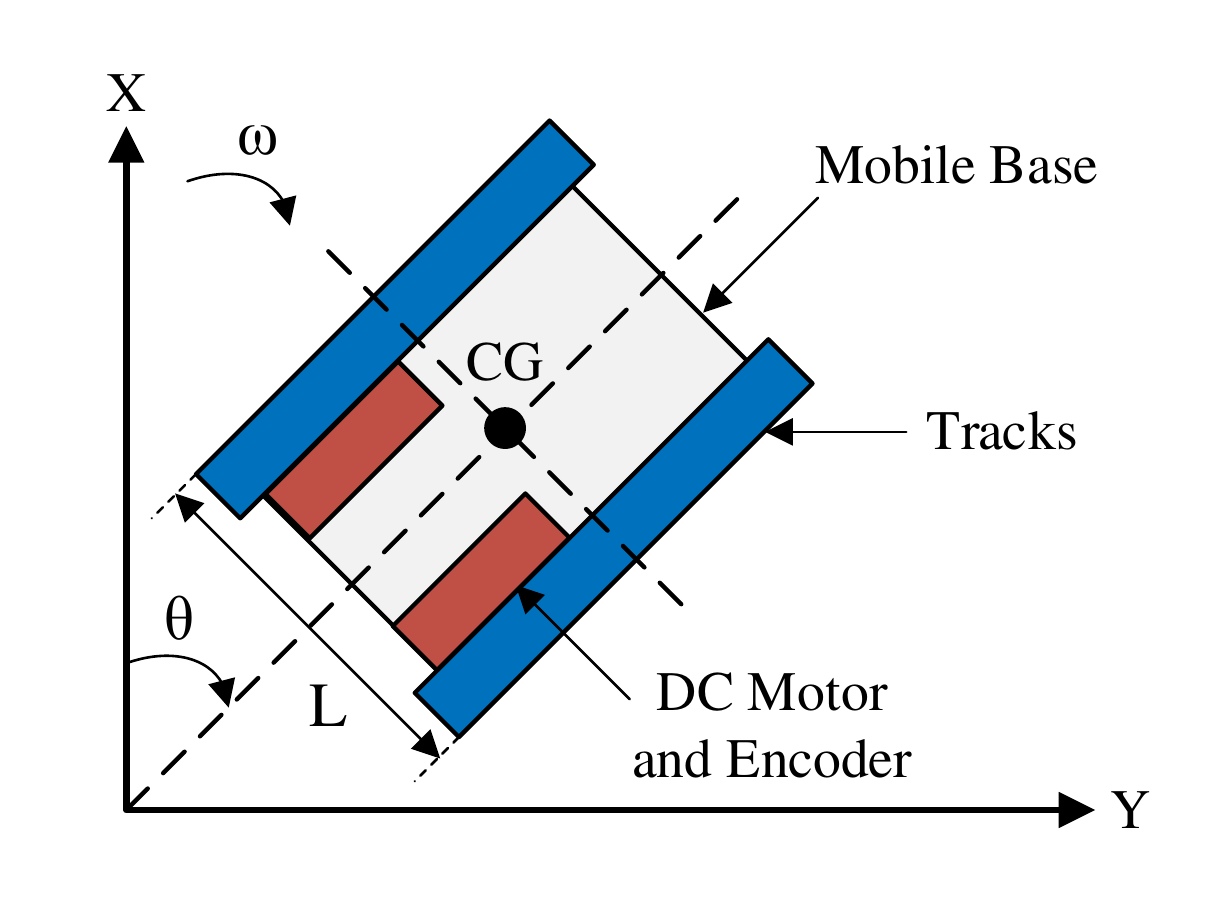}
  \caption{Schematic illustration of the mobile robot.}\label{fig_mobilerobot}
\end{figure}

The feedback linearization procedure is illustrated for the unicycle model \eqref{eq_mobilerobotmodel} in literature \cite{Orlolo2002}. The linearizing states are defined as $x_{1}=p_{x}$, $x_{2}=p_{y}$. If $x_{1}$ and  $x_{2}$ are differentiated taking \eqref{eq_mobilerobotmodel} into account, 
\begin{eqnarray}
\left[
  \begin{array}{c}
   \ddot{x}_{1} \\
   \ddot{x}_{2} 
  \end{array}
  \right]  = \dot{v} \left[
  \begin{array}{c}
  \cos{(\theta)}\\
   \sin{(\theta)}
  \end{array}
  \right] +  v \dot{\theta} \left[
  \begin{array}{r}
  -\sin{(\theta)}\\
   \cos{(\theta)}
  \end{array}
  \right] 
\end{eqnarray}
Let us define linear acceleration as $\xi=\dot{v}$ and consider $\dot{\theta}=\omega$. The aforementioned equation is re-arranged as follows:
\begin{eqnarray}
\left[
  \begin{array}{c}
   \ddot{x}_{1} \\
   \ddot{x}_{2} 
  \end{array}
  \right]  =\left[
  \begin{array}{l r}
  \cos{(\theta)} &  -v \sin{(\theta)}\\
   \sin{(\theta)} &  v \cos{(\theta)}
  \end{array}
  \right] \left[
  \begin{array}{c}
  \xi\\
   \omega
  \end{array}
  \right] 
\end{eqnarray}
If the linear velocity is not equal to zero, i.e., $v \neq 0$, the matrix is non-singular. Thus, $\xi$ and $\omega$ are obtained as follows: 
\begin{eqnarray}
\left[
  \begin{array}{c}
  \xi\\
   \omega
  \end{array}
  \right]   =\left[
  \begin{array}{l r}
  \cos{(\theta)} &  -v \sin{(\theta)}\\
   \sin{(\theta)} &  v \cos{(\theta)}
  \end{array}
  \right]^{-1} \left[
  \begin{array}{c}
   \ddot{x}_{1} \\
   \ddot{x}_{2} 
  \end{array}
  \right] 
\end{eqnarray}
The system is linearized and described by two chains of integrator
  $\left[
  \ddot{x}_{1},   \ddot{x}_{2}
  \right]^{T}   =\left[  u_{1} ,   u_{2}
  \right]^{T}$
so that the feedback linearization law is obtained as follows:
\begin{equation}\label{eq_transformation}
\dot{v} =u_{1} \cos{\theta} + u_{2} \sin{\theta}, \quad \textrm{and} \quad
\omega =\frac{u_{2} \cos{\theta} - u_{1} \sin{\theta}}{v}
\end{equation}
The new states are defined as $\textbf{x}=[x_{1}, x_{2}, x_{3}, x_{4}]^{T}=[p_{x}, p_{y}, \dot{p}_{x}, \dot{p}_{y}]^{T}$ and the new state-space equations are written as follows:
\begin{eqnarray}
\begin{matrix}
\dot{x}_{1} & = & x_{3} \\
  \dot{x}_{3} & = & u_{1} \\
\end{matrix} \qquad
\text{and}  \qquad
\begin{matrix}
\dot{x}_{2} & = & x_{4} \\
  \dot{x}_{4} & = & u_{2} \\
\end{matrix}
\end{eqnarray}

It is assumed that the mobile robot tracks a smooth trajectory $(r_{1}(t), r_{2}(t))$ and the desired linear velocity  along this trajectory never goes to zero. The control laws are written as follows:
\begin{eqnarray}\label{eq_flc_mobilerobot}
\left[
  \begin{array}{c}
u_{1}\\
u_{2}
  \end{array}
  \right]   =\left[
  \begin{array}{c}
  \ddot{r}_{1} + k_{1} ({r}_{1} - x_{1}) + k_{3} ({r}_{3} - x_{3}) \\
  \ddot{r}_{2} + k_{2} ({r}_{2} - x_{2}) + k_{4} ({r}_{4} - x_{4}) 
  \end{array}
  \right]
\end{eqnarray}
These generated signals are fed to the transformation matrix in \eqref{eq_transformation} to obtain the actual inputs, i.e., the desired linear and angular velocities, for a mobile robot.


\subsection{Implementation of the CLELC algorithm}
The CLELC laws for a mobile robot are written by utilizing \eqref{eq_new_controllaw} and \eqref{eq_flc_mobilerobot} as follows:
\begin{eqnarray}\label{eq_impcontrol}
\left[
  \begin{array}{c}
u_{1}\\
u_{2}
  \end{array}
  \right]   =\left[
  \begin{array}{c}
  \ddot{r}_{1} + k_{1} ({r}_{1} - x_{1}) + k_{3} ({r}_{3} - x_{3}) + u_{n1} \\
  \ddot{r}_{2} + k_{2} ({r}_{2} - x_{2}) + k_{4} ({r}_{4} - x_{4}) + u_{n2}
  \end{array}
  \right]
\end{eqnarray}
The number of membership functions for each input to each adaptive neuro-fuzzy structure is set to $3$ and the learning rate for the sliding mode learning algorithm $\alpha$ is set to $5$. 

After feedback linearization is applied, the mobile robot dynamics consists of two single-input-single-output systems which are second-order systems. Therefore, the sliding surfaces are obtained as $s _{1}= \dot{e}_{3} + 2 \lambda e_{3} + \lambda^{2} e_{1} $ and $s _{2}= \dot{e}_{4} + 2 \lambda e_{4} + \lambda^{2} e_{2} $. The slope of the sliding surface $\lambda$ is set to $0.3$; therefore,  the controller gain vector is obtained as $\textbf{k}=[k_{1}, k_{2},k_{3},k_{4}]=[0.09, 0.09, 0.6, 0.6]$ as formulated in \eqref{eq_ks}. 

The heading angle reference is derived from \eqref{eq_mobilerobotmodel} as follows:
\begin{equation}\label{eq_ref_ha}
\theta_{r} = \arctan2{(r_{4}, r_{3})} +k \pi \quad k=0,1
\end{equation}

To increase the tracking accuracy, feedforward actions are derived in a time-interval $t \in [0, T]$ by using the system model \eqref{eq_mobilerobotmodel} and the desired linear velocities in x- and y-axes $(r_{3}(t), r_{4}(t))$ as follows \cite{Kayacan2018}:
\begin{equation}
  v_{d} =\pm \sqrt{(r_{3})^{2} + (r_{4})^{2}}  \quad \textrm{and} \quad
  \omega_{d} = \frac{\dot{r_{4}} r_{3} - \dot{r_{3}} r_{4}}{(r_{3})^{2} + (r_{4})^{2}}
\end{equation}

\subsection{Experimental Results}\label{sec_expresults}

The start and end points of the reference trajectory and the initial point of the mobile robot are shown in Fig. \ref{fig_tra}. The traditional FLC and the CLELC can reach to the reference trajectory and stay on-track. The Euclidean errors for the FLC and CLELC are shown in Fig. \ref{fig_error}. The mean values of Euclidean errors for the mobile robot controlled by the FLC and CLELC after staying on-track are respectively $0.12$ $m$ and $0.03$ $m$ approximately. This purports that the CLELC results in more accurate trajectory tracking performance than the FLC, which demonstrates the capability of the CLELC algorithm. In the previous works, a nonlinear model predictive controller has been designed and implemented on the same mobile robot in \cite{Young2018}. Trajectory tracking error has been around $0.05$ $m$. Moreover, the kinematic model was augmented with two traction parameters and a nonlinear moving horizon estimator was used to estimate these traction parameters. Then, a nonlinear model predictive controller based on the adaptive kinematic model has been designed and implemented on the same mobile robot in \cite {erkanjfr}. Trajectory tracking error has been around $0.04$ $m$. Consequently, the CLELC algorithm gives more accurate trajectory tracking performance than previous implemented controllers on the same mobile robot.  

The reference and measured heading angles are shown in Fig. \ref{fig_ha}. The reference heading angle is calculated from the reference trajectory as formulated in \eqref{eq_ref_ha}. The mobile robot controlled by the CLELC is capable of tracking the heading angle reference accurately. The same consequence can be obtained from the heading angle error as shown in Fig. \ref{fig_haerror}. 

The feedback and intelligent control signals fed to the feedback linearization law \eqref{eq_transformation} are respectively  shown for $u_{1}$ and $u_{2}$ in Figs. \ref{fig_u1}-\ref{fig_u2}. The outputs of the feedback controllers converge to zero while the intelligent controllers learn the mobile robot behavior online. The intelligent controllers take over the control, thus becoming the leading controllers.

The output of the feedback linearization law \eqref{eq_transformation} results in the desired linear and angular velocities fed to the mobile robot. The linear velocity reference and measurements of the mobile robot are shown in Fig. \ref{fig_speed}. The linear velocity reference of the mobile robot is changing around a constant speed because the linear velocity throughout the reference trajectory is constant. The angular velocity reference and measurements are shown in Fig. \ref{fig_yawrate}. The minimum and maximum angular velocities are limited by $-0.1$ and $0.1$ rad/s due to the capability of the mobile robot. The performance of the low-level controller is sufficient to track the reference signals despite the high noisy measurements.

\begin{figure}[t!]
\centering
\subfigure[Reference trajectory.]{
\includegraphics[width=0.46\columnwidth]{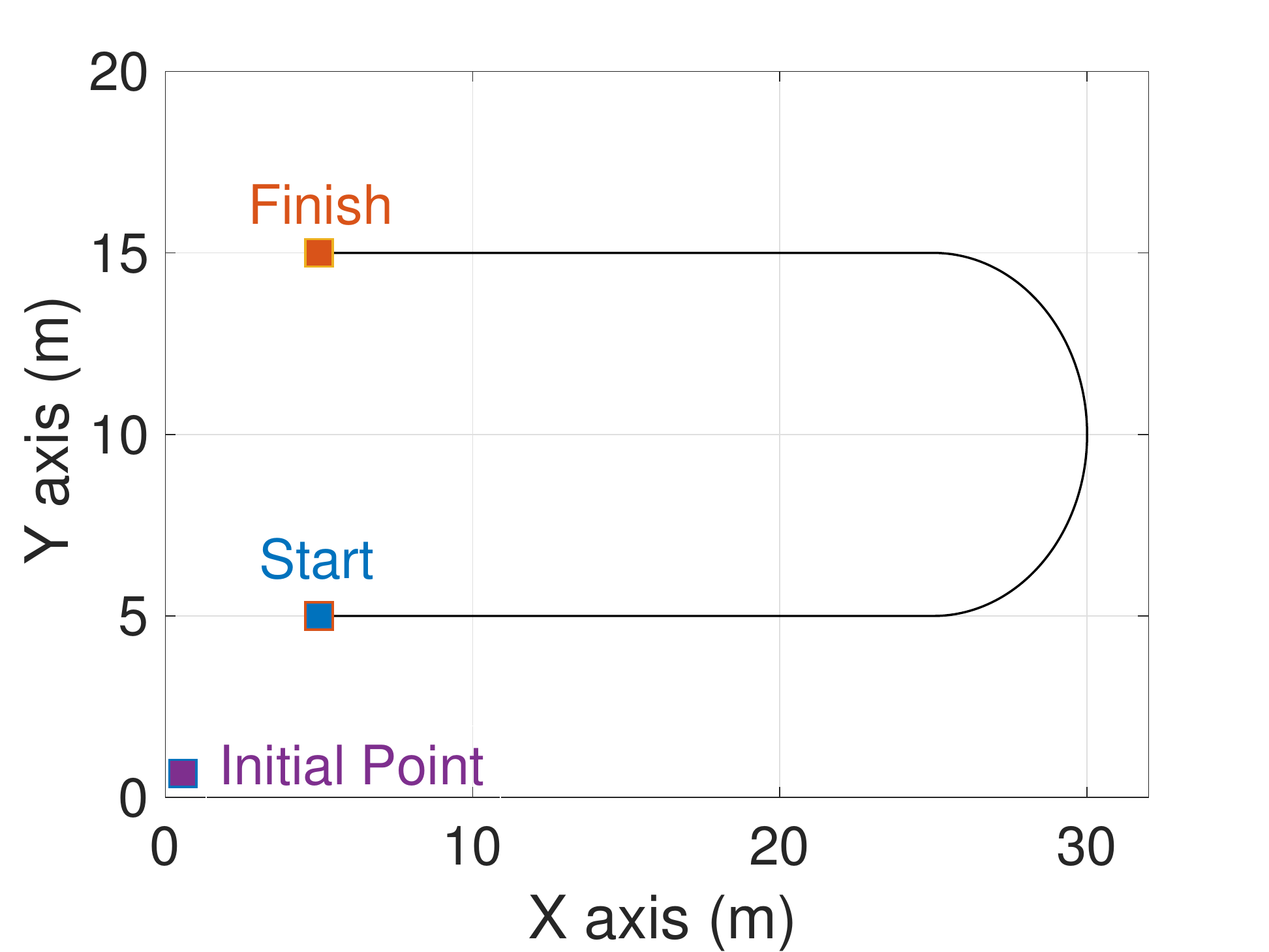}
\label{fig_tra}
}
\subfigure[Euclidean error.]{
\includegraphics[width=0.46\columnwidth]{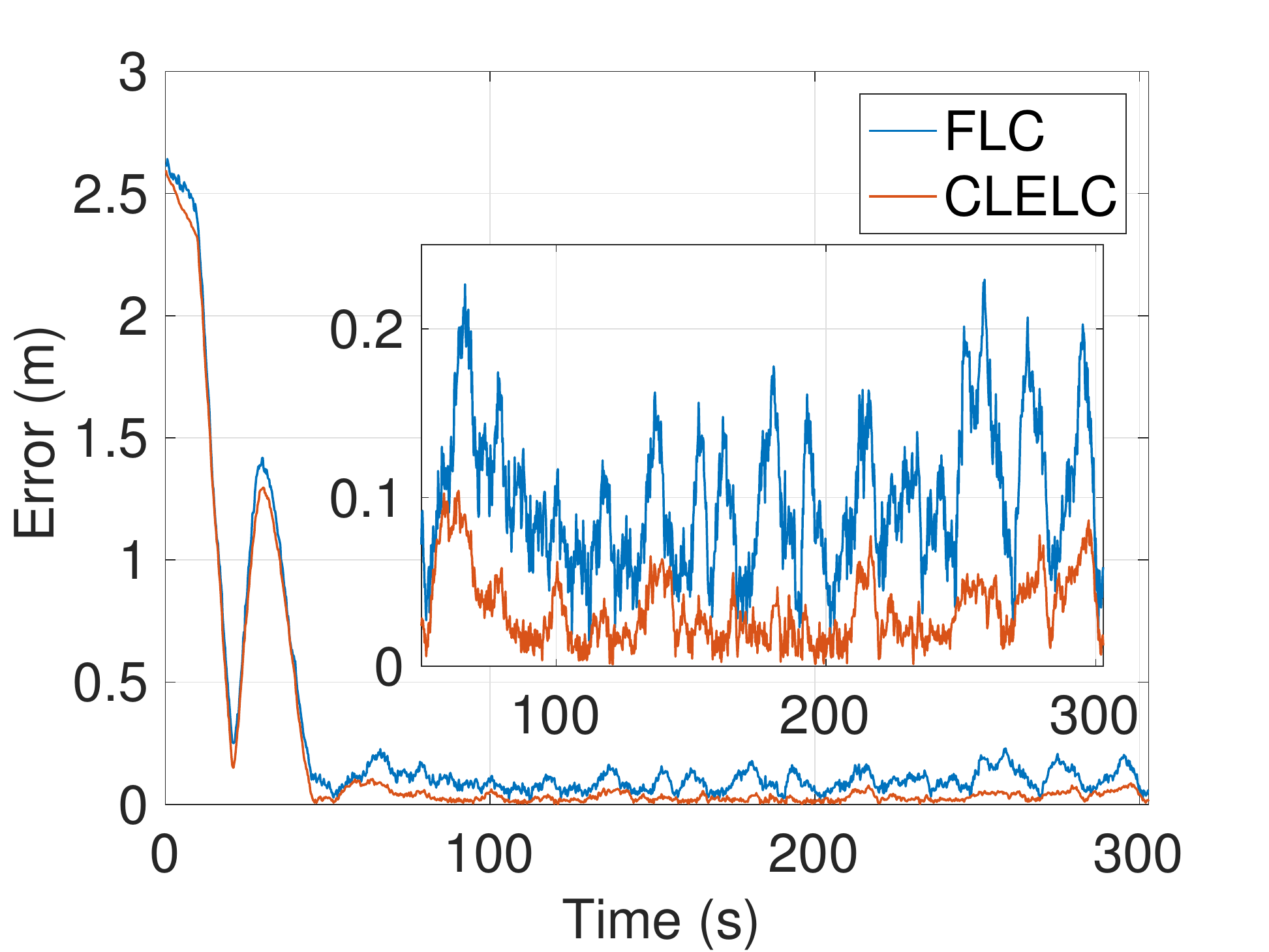}
\label{fig_error}
}
\subfigure[Heading angle reference and measurements.]{
\includegraphics[width=0.46\columnwidth]{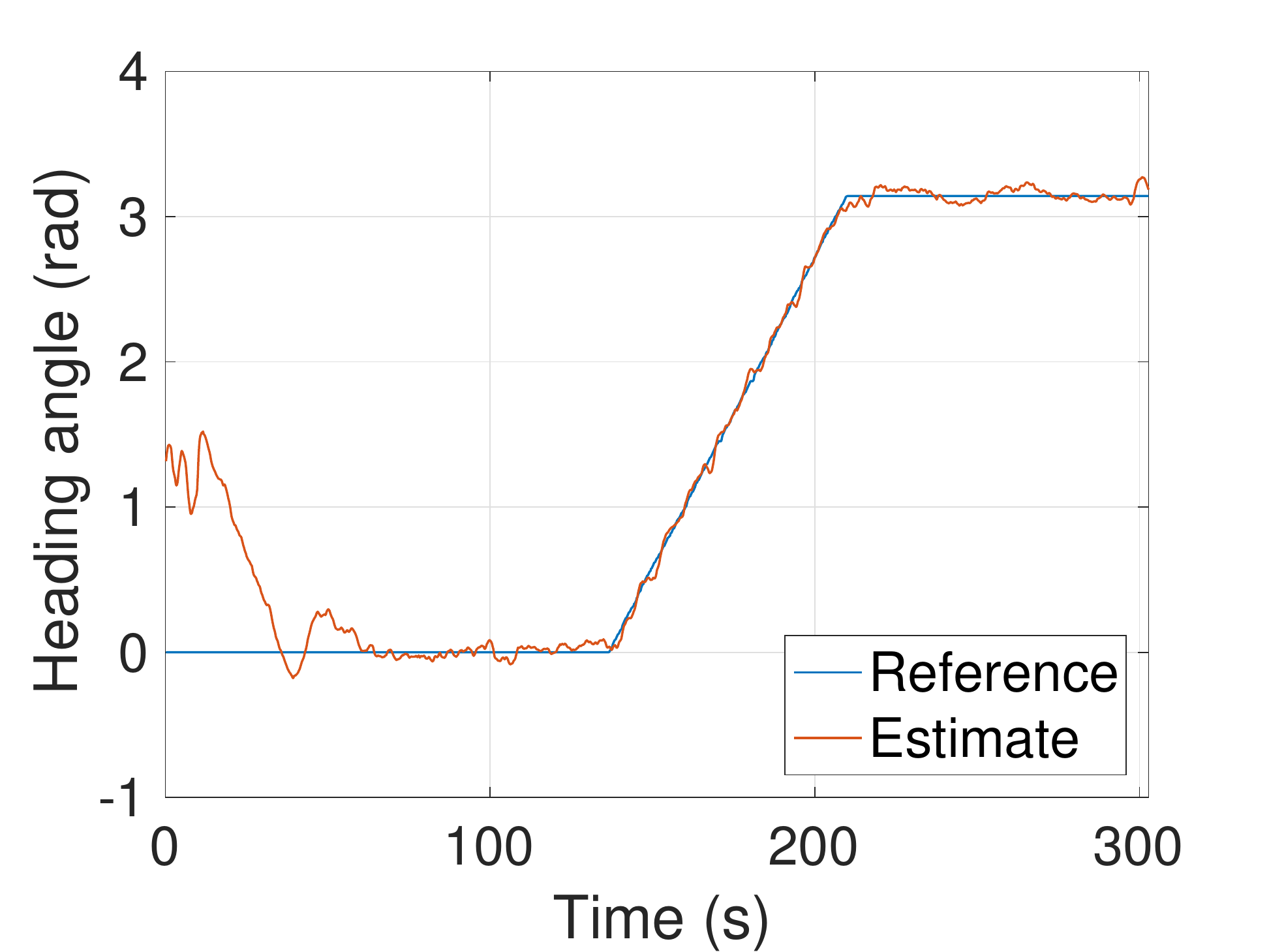}
\label{fig_ha}
}
\subfigure[Heading angle error.]{
\includegraphics[width=0.46\columnwidth]{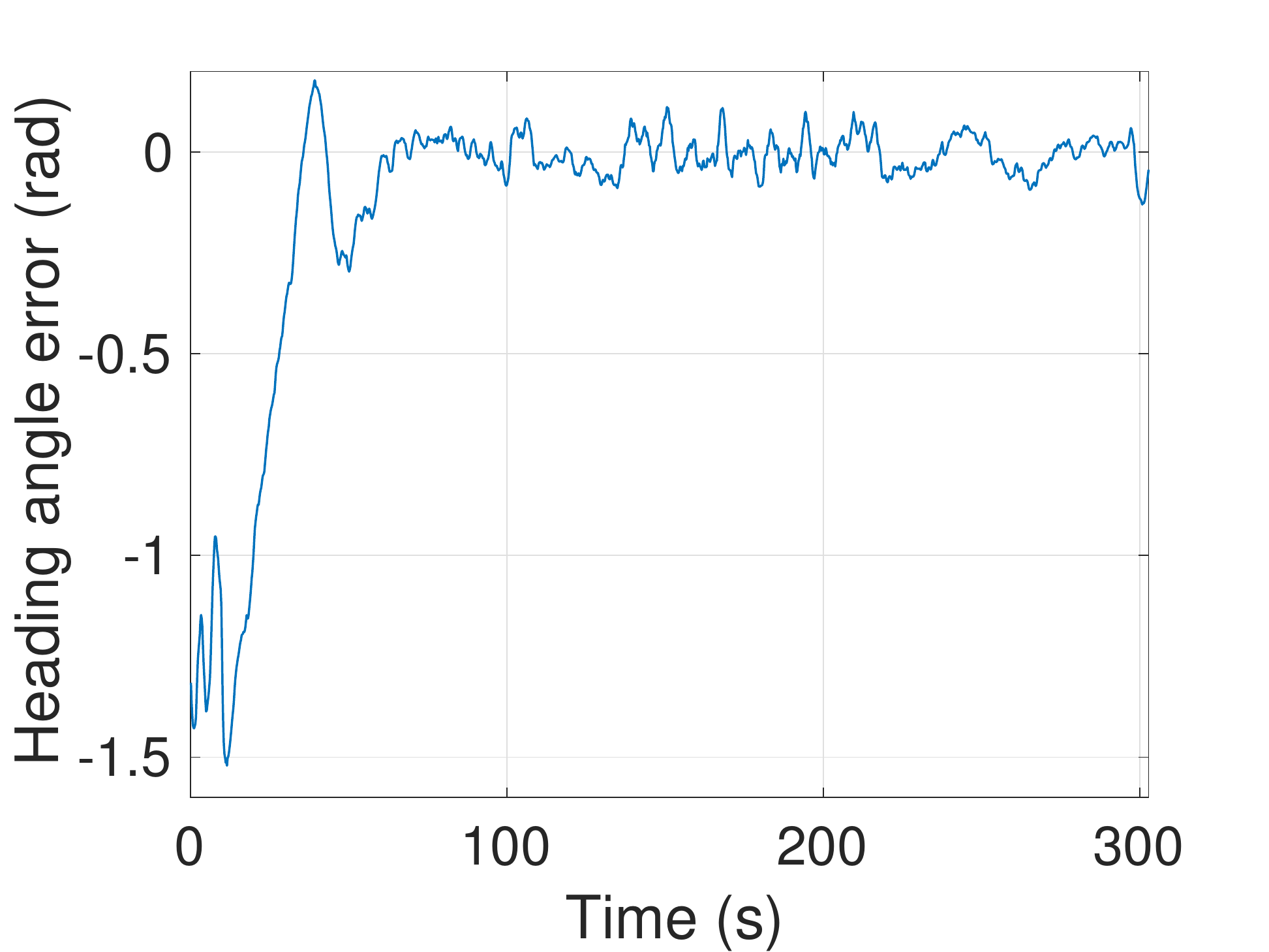}
\label{fig_haerror}
}
\subfigure[Feedback and intelligent control signals for $u_{1}$.]{
\includegraphics[width=0.46\columnwidth]{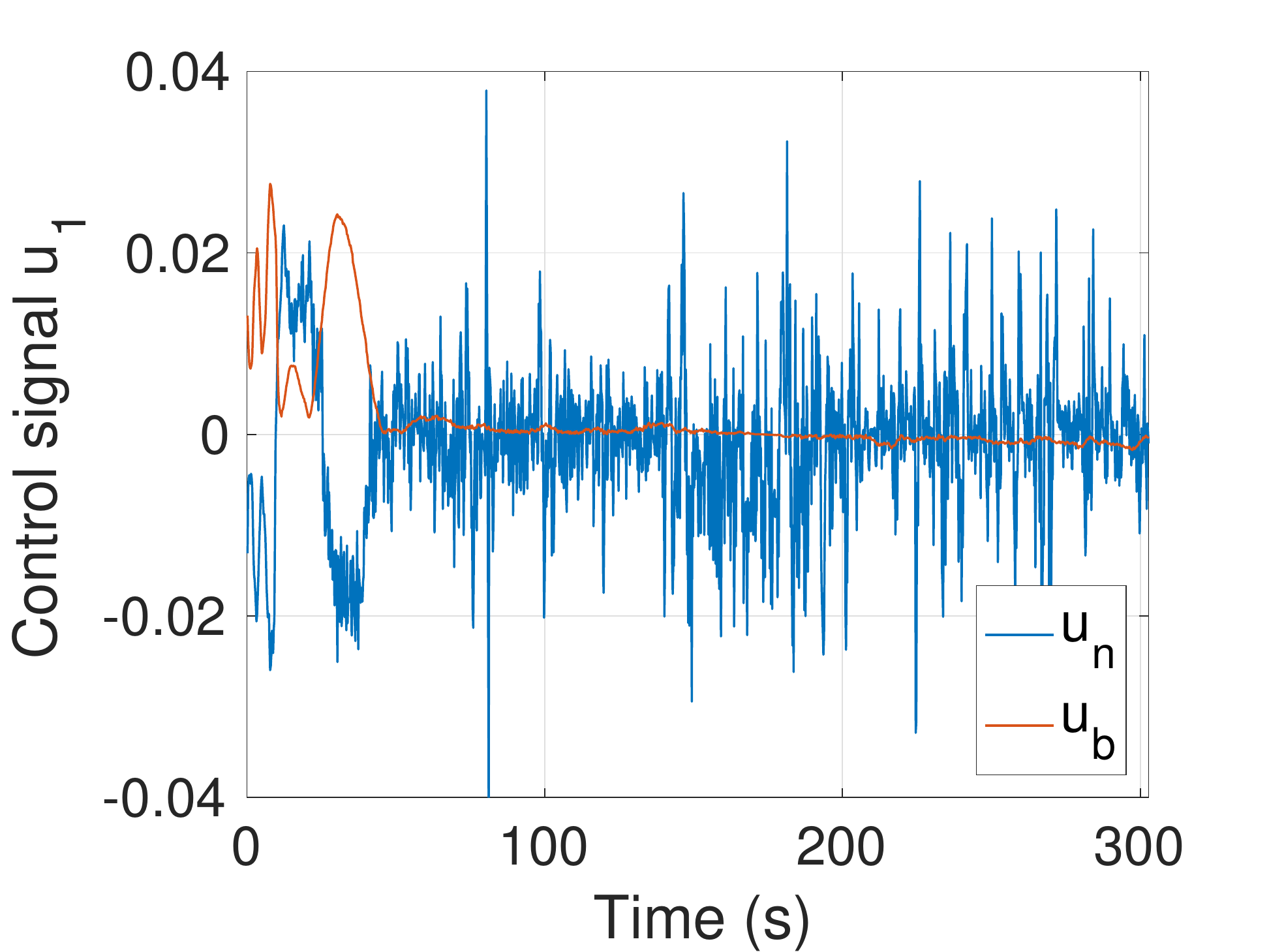}
\label{fig_u1}
}
\subfigure[Feedback and intelligent control signals for $u_{2}$.]{
\includegraphics[width=0.46\columnwidth]{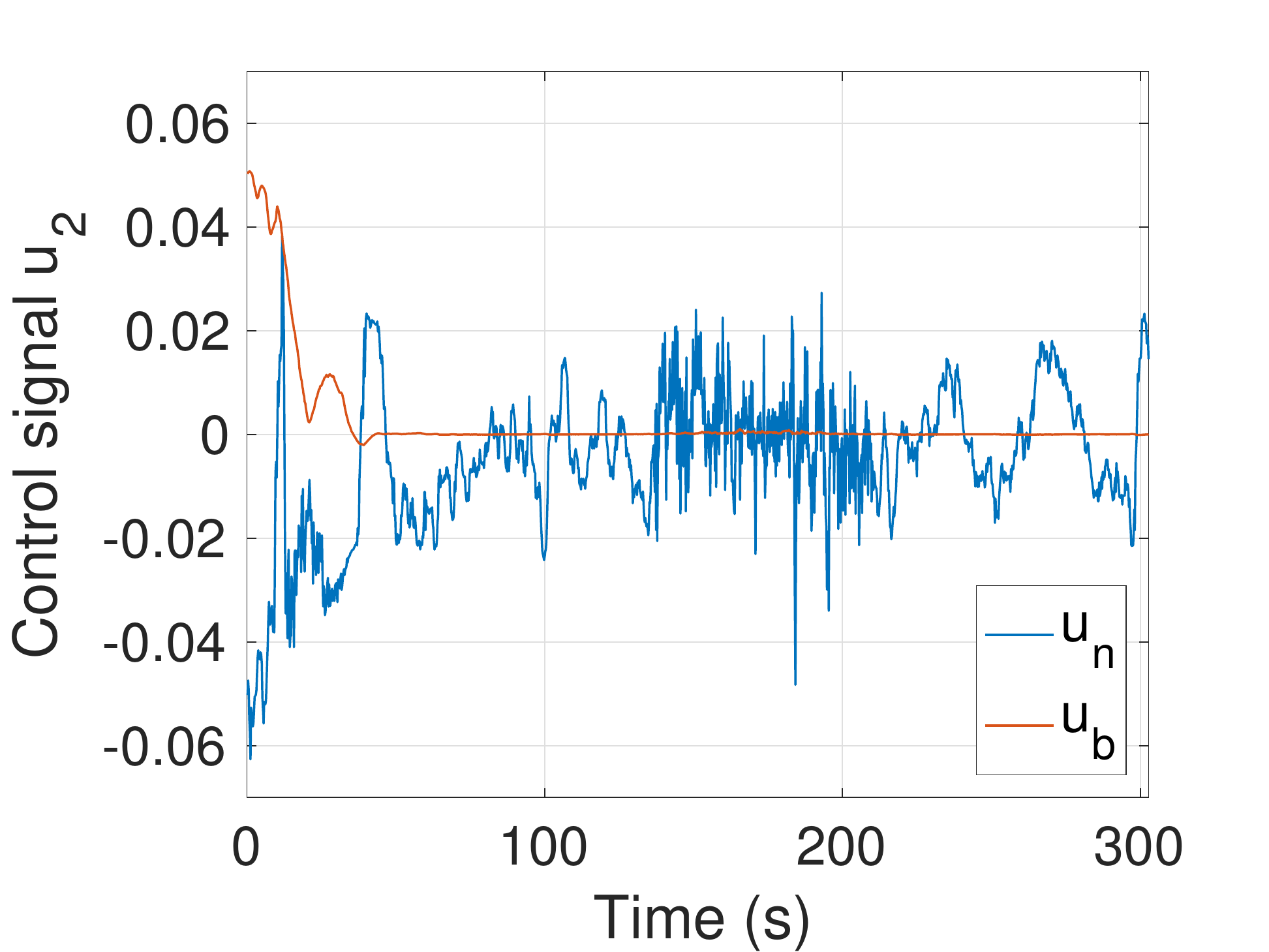}
\label{fig_u2}
}
\subfigure[Linear velocity reference and measurements.]{
\includegraphics[width=0.46\columnwidth]{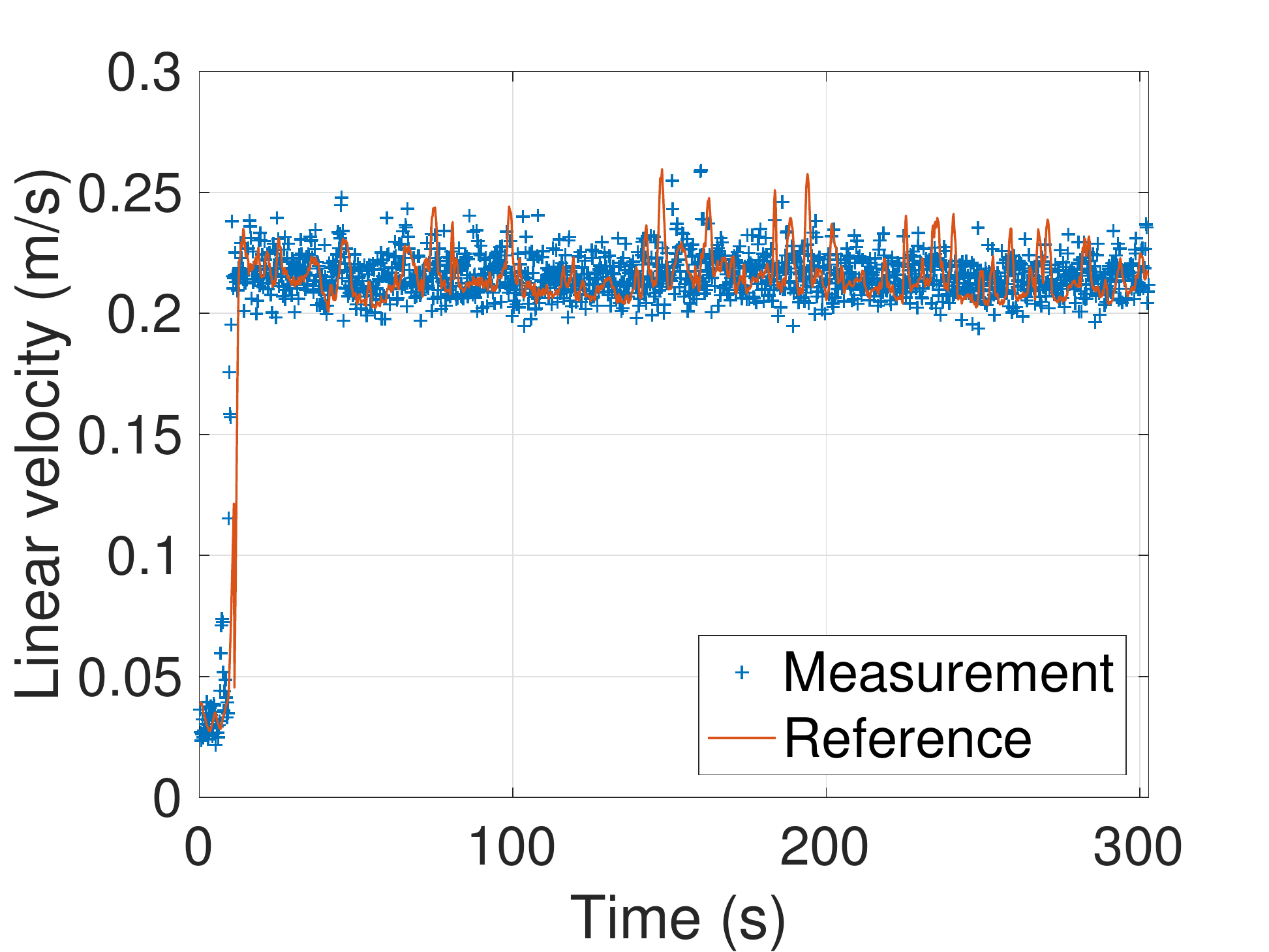}
\label{fig_speed}
}
\subfigure[Angular velocity reference and measurements.]{
\includegraphics[width=0.46\columnwidth]{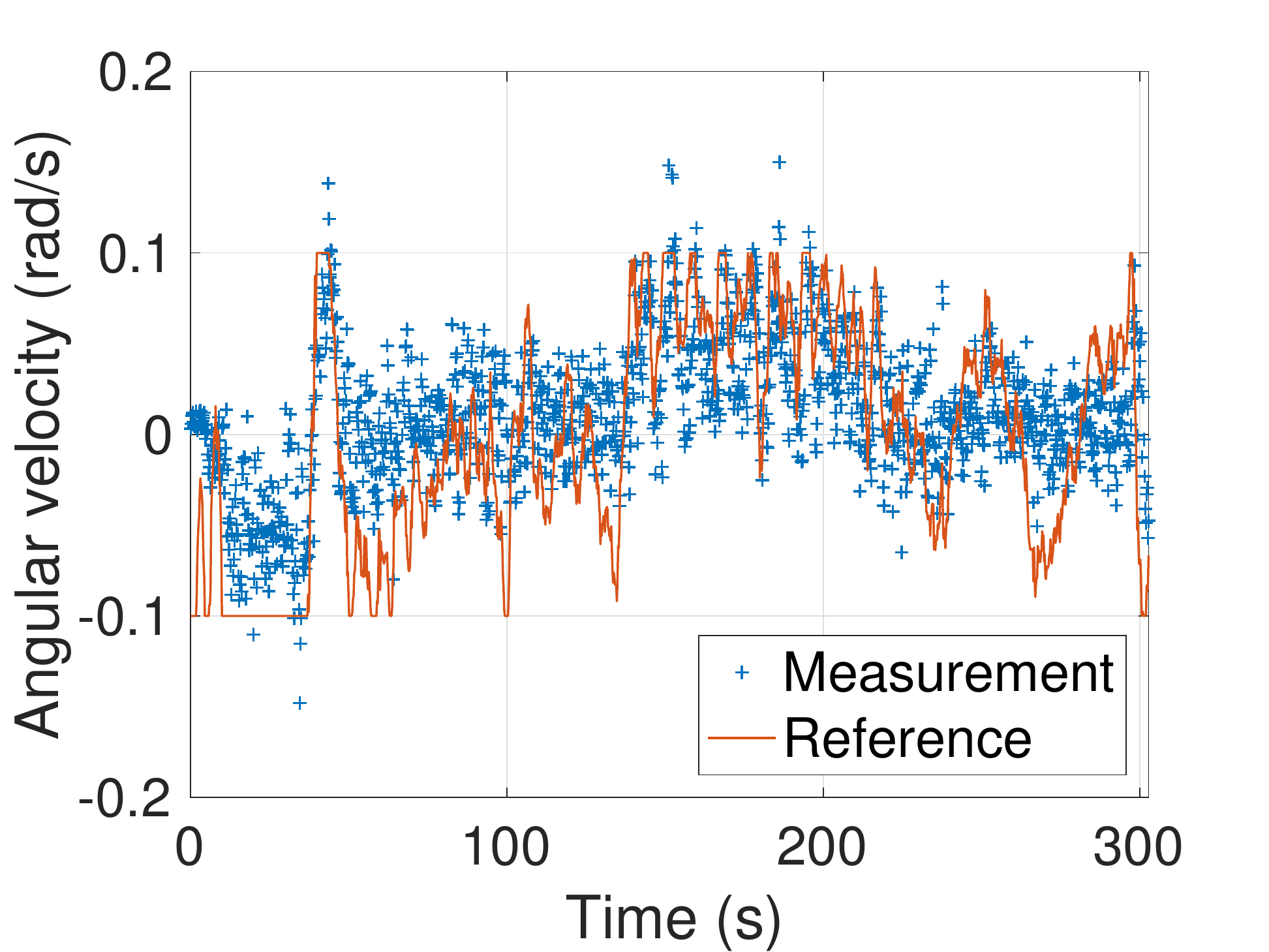}
\label{fig_yawrate}
}
\caption[Optional caption for list of figures]{ Experimental results on the mobile robot.}
\label{fig_exp}
\end{figure}

\section{Conclusions}\label{sec_conc}

In this study, a novel CLELC algorithm has been developed for the control of feedback linearizable systems with uncertainties and experimentally validated on a mobile robot. The simulation results show that the CLELC algorithm has exhibited better control performance in terms of smaller rise time, settling time and overshoot in the absence of uncertainties as compared to the traditional FLC. Moreover, it can drive the error to zero in the presence of uncertainties. The experimental results on a tracked mobile robot show that the CLELC algorithm ensures more accurate trajectory tracking performance when compared to the FLC. The mean values of the Euclidean errors for the FLC and CLELC after staying on-track are respectively $0.12$ m and $0.03$ m approximately. 

As a future study, the proposed algorithm can be developed for uncertain multi-input-multi-output nonlinear systems and the constraints on the error and reference rates in the stability analysis can be lightened.

\bibliography{ieee}

\begin{thebibliography}{10}
\providecommand{\url}[1]{#1}
\csname url@samestyle\endcsname
\providecommand{\newblock}{\relax}
\providecommand{\bibinfo}[2]{#2}
\providecommand{\BIBentrySTDinterwordspacing}{\spaceskip=0pt\relax}
\providecommand{\BIBentryALTinterwordstretchfactor}{4}
\providecommand{\BIBentryALTinterwordspacing}{\spaceskip=\fontdimen2\font plus
\BIBentryALTinterwordstretchfactor\fontdimen3\font minus
  \fontdimen4\font\relax}
\providecommand{\BIBforeignlanguage}[2]{{%
\expandafter\ifx\csname l@#1\endcsname\relax
\typeout{** WARNING: IEEEtran.bst: No hyphenation pattern has been}%
\typeout{** loaded for the language `#1'. Using the pattern for}%
\typeout{** the default language instead.}%
\else
\language=\csname l@#1\endcsname
\fi
#2}}
\providecommand{\BIBdecl}{\relax}
\BIBdecl

\bibitem{KAYACAN2012863}
E.~Kayacan, E.~Kayacan, H.~Ramon, and W.~Saeys, ``Velocity control of a
  spherical rolling robot using a grey-pid type fuzzy controller with an
  adaptive step size,'' \emph{IFAC Proceedings Volumes}, vol.~45, no.~22, pp.
  863 -- 868, 2012, 10th IFAC Symposium on Robot Control.

\bibitem{Chen2016}
W.~H. Chen, J.~Yang, L.~Guo, and S.~Li, ``Disturbance-observer-based control
  and related methods: An overview,'' \emph{IEEE Transactions on Industrial
  Electronics}, vol.~63, no.~2, pp. 1083--1095, Feb 2016.

\bibitem{KAYACAN2017276}
E.~Kayacan, J.~M. Peschel, and G.~Chowdhary, ``A self-learning disturbance
  observer for nonlinear systems in feedback-error learning scheme,''
  \emph{Engineering Applications of Artificial Intelligence}, vol.~62, pp. 276
  -- 285, 2017.

\bibitem{MATRAJI2018167}
I.~Matraji, A.~Al-Durra, A.~Haryono, K.~Al-Wahedi, and M.~Abou-Khousa,
  ``Trajectory tracking control of skid-steered mobile robot based on adaptive
  second order sliding mode control,'' \emph{Control Engineering Practice},
  vol.~72, pp. 167 -- 176, 2018.

\bibitem{7879290}
B.~Xiao, Q.~Dong, D.~Ye, L.~Liu, and X.~Huo, ``A general tracking control
  framework for uncertain systems with exponential convergence performance,''
  \emph{IEEE/ASME Transactions on Mechatronics}, vol.~23, no.~1, pp. 111--120,
  Feb 2018.

\bibitem{8306927}
M.~Van, ``An enhanced robust fault tolerant control based on an adaptive fuzzy
  pid-nonsingular fast terminal sliding mode control for uncertain nonlinear
  systems,'' \emph{IEEE/ASME Transactions on Mechatronics}, vol.~23, no.~3, pp.
  1362--1371, June 2018.

\bibitem{8316974}
P.~Yu, M.~Wu, J.~She, K.~Liu, and Y.~Nakanishi, ``Robust tracking and
  disturbance rejection for linear uncertain system with unknown state delay
  and disturbance,'' \emph{IEEE/ASME Transactions on Mechatronics}, vol.~23,
  no.~3, pp. 1445--1455, June 2018.

\bibitem{Kayacan2017Hinf}
E.~Kayacan, ``Multiobjective $h_{\infty }$ control for string stability of
  cooperative adaptive cruise control systems,'' \emph{IEEE Transactions on
  Intelligent Vehicles}, vol.~2, no.~1, pp. 52--61, March 2017.

\bibitem{7323849}
Z.~Sun, J.~Zheng, Z.~Man, and H.~Wang, ``Robust control of a vehicle
  steer-by-wire system using adaptive sliding mode,'' \emph{IEEE Transactions
  on Industrial Electronics}, vol.~63, no.~4, pp. 2251--2262, April 2016.

\bibitem{7762162}
B.~Ren, Q.~C. Zhong, and J.~Dai, ``Asymptotic reference tracking and
  disturbance rejection of ude-based robust control,'' \emph{IEEE Transactions
  on Industrial Electronics}, vol.~64, no.~4, pp. 3166--3176, April 2017.

\bibitem{7574310}
H.~Liu, W.~Zhao, Z.~Zuo, and Y.~Zhong, ``Robust control for quadrotors with
  multiple time-varying uncertainties and delays,'' \emph{IEEE Transactions on
  Industrial Electronics}, vol.~64, no.~2, pp. 1303--1312, Feb 2017.

\bibitem{GUENOUNE201723}
I.~Guenoune, F.~Plestan, A.~Chermitti, and C.~Evangelista, ``Modeling and
  robust control of a twin wind turbines structure,'' \emph{Control Engineering
  Practice}, vol.~69, pp. 23 -- 35, 2017.

\bibitem{erkanasjc}
E.~Kayacan and T.~I. Fossen, ``Feedback linearization control for systems with
  mismatched uncertainties via disturbance observers,'' \emph{Asian Journal of
  Control}, vol.~21, no.~3, pp. 1064--1076, 2019.

\bibitem{Huang2015}
J.~Huang, S.~Ri, L.~Liu, Y.~Wang, J.~Kim, and G.~Pak, ``Nonlinear disturbance
  observer-based dynamic surface control of mobile wheeled inverted pendulum,''
  \emph{IEEE Transactions on Control Systems Technology}, vol.~23, no.~6, pp.
  2400--2407, Nov 2015.

\bibitem{AMINI20171}
M.~Amini, M.~Shahbakhti, S.~Pan, and J.~Hedrick, ``Bridging the gap between
  designed and implemented controllers via adaptive robust discrete sliding
  mode control,'' \emph{Control Engineering Practice}, vol.~59, pp. 1 -- 15,
  2017.

\bibitem{8170236}
M.~P. Aghababa, ``Sliding-mode control composite with disturbance observer for
  tracking control of mismatched uncertain ndof nonlinear systems,''
  \emph{IEEE/ASME Transactions on Mechatronics}, vol.~23, no.~1, pp. 482--490,
  Feb 2018.

\bibitem{8326557}
C.~Huang, F.~Naghdy, and H.~Du, ``Observer-based fault tolerant controller for
  uncertain steer-by-wire systems using the delta operator,'' \emph{IEEE/ASME
  Transactions on Mechatronics}, pp. 1--1, 2018.

\bibitem{Mousazadeh2013}
H.~Mousazadeh, ``A technical review on navigation systems of agricultural
  autonomous off-road vehicles,'' \emph{Journal of Terramechanics}, vol.~50,
  no.~3, pp. 211 -- 232, 2013.

\bibitem{Deremetz2017}
M.~Deremetz, R.~Lenain, B.~Thuilot, and V.~Rousseau, ``Adaptive trajectory
  control of off-road mobile robots: A multi-model observer approach,'' in
  \emph{2017 IEEE International Conference on Robotics and Automation (ICRA)},
  May 2017, pp. 4407--4413.

\bibitem{BEGNINI201727}
M.~Begnini, D.~W. Bertol, and N.~A. Martins, ``A robust adaptive fuzzy variable
  structure tracking control for the wheeled mobile robot: Simulation and
  experimental results,'' \emph{Control Engineering Practice}, vol.~64, pp. 27
  -- 43, 2017.

\bibitem{Ostafew2016}
C.~J. Ostafew, A.~P. Schoellig, and T.~D. Barfoot, ``Robust constrained
  learning-based nmpc enabling reliable mobile robot path tracking,'' \emph{The
  International Journal of Robotics Research}, vol.~35, no.~13, pp. 1547--1563,
  2016.

\bibitem{Rossomando2014}
F.~G. Rossomando, C.~Soria, and R.~Carelli, ``Sliding mode neuro adaptive
  control in trajectory tracking for mobile robots,'' \emph{Journal of
  Intelligent {\&} Robotic Systems}, vol.~74, no.~3, pp. 931--944, Jun 2014.

\bibitem{yu_chen_2015}
C.~Yu and X.~Chen, ``Trajectory tracking of wheeled mobile robot by adopting
  iterative learning control with predictive, current, and past learning
  items,'' \emph{Robotica}, vol.~33, no.~7, p. 1393–1414, 2015.

\bibitem{Ostafew2013}
C.~J. Ostafew, A.~P. Schoellig, T.~D. Barfoot, and J.~Collier, ``Learning-based
  nonlinear model predictive control to improve vision-based mobile robot path
  tracking,'' \emph{Journal of Field Robotics}, vol.~33, no.~1, pp. 133--152,
  2016.

\bibitem{Cariou2009}
C.~Cariou, R.~Lenain, B.~Thuilot, and M.~Berducat, ``Automatic guidance of a
  four-wheel-steering mobile robot for accurate field operations,''
  \emph{Journal of Field Robotics}, vol.~26, no. 6-7, pp. 504--518, 2009.

\bibitem{7525615}
E.~Kayacan, J.~M. Peschel, and E.~Kayacan, ``Centralized, decentralized and
  distributed nonlinear model predictive control of a tractor-trailer system: A
  comparative study,'' in \emph{2016 American Control Conference (ACC)}, July
  2016, pp. 4403--4408.

\bibitem{Kayacan2018bc}
E.~Kayacan, E.~Kayacan, I.-M. Chen, H.~Ramon, and W.~Saeys, \emph{On the
  Comparison of Model-Based and Model-Free Controllers in Guidance, Navigation
  and Control of Agricultural Vehicles}.\hskip 1em plus 0.5em minus 0.4em\relax
  Cham: Springer International Publishing, 2018, pp. 49--73.

\bibitem{8409989}
E.~Kayacan, W.~Saeys, H.~Ramon, C.~Belta, and J.~Peschel, ``Experimental
  validation of linear and nonlinear mpc on an articulated unmanned ground
  vehicle,'' \emph{IEEE/ASME Transactions on Mechatronics}, vol.~23, no.~5, pp.
  2023--2030, Oct 2018.

\bibitem{erkanjfr}
E.~Kayacan, S.~N. Young, J.~M. Peschel, and G.~Chowdhary, ``High-precision
  control of tracked field robots in the presence of unknown traction
  coefficients,'' \emph{Journal of Field Robotics}, vol.~35, no.~7, pp.
  1050--1062.

\bibitem{KayacanRSS}
E.~Kayacan, Z.~Zhang, and G.~Chowdhary, ``Embedded high precision control and
  corn stand counting algorithms for an ultra-compact 3d printed field robot,''
  in \emph{Proceedings of Robotics: Science and Systems}, Pittsburgh,
  Pennsylvania, June 2018.

\bibitem{erkansmlc}
E.~Kayacan, ``Sliding mode learning control of uncertain nonlinear systems with
  lyapunov stability analysis,'' \emph{Transactions of the Institute of
  Measurement and Control}, vol.~41, no.~6, pp. 1750--1760, 2019.

\bibitem{AYAS201744}
M.~S. Ayas and I.~H. Altas, ``Fuzzy logic based adaptive admittance control of
  a redundantly actuated ankle rehabilitation robot,'' \emph{Control
  Engineering Practice}, vol.~59, pp. 44 -- 54, 2017.

\bibitem{Orlolo2002}
G.~Oriolo, A.~D. Luca, and M.~Vendittelli, ``Wmr control via dynamic feedback
  linearization: design, implementation, and experimental validation,''
  \emph{IEEE Transactions on Control Systems Technology}, vol.~10, no.~6, pp.
  835--852, Nov 2002.

\bibitem{7468507}
M.~Seder, M.~Baotić, and I.~Petrović, ``Receding horizon control for
  convergent navigation of a differential drive mobile robot,'' \emph{IEEE
  Transactions on Control Systems Technology}, vol.~25, no.~2, pp. 653--660,
  March 2017.

\bibitem{7562522}
H.~Xiao, Z.~Li, C.~Yang, L.~Zhang, P.~Yuan, L.~Ding, and T.~Wang, ``Robust
  stabilization of a wheeled mobile robot using model predictive control based
  on neurodynamics optimization,'' \emph{IEEE Transactions on Industrial
  Electronics}, vol.~64, no.~1, pp. 505--516, Jan 2017.

\bibitem{Kayacan2018}
E.~Kayacan and G.~Chowdhary, ``Tracking error learning control for precise
  mobile robot path tracking in outdoor environment,'' \emph{Journal of
  Intelligent {\&} Robotic Systems}, Aug 2018.

\bibitem{Young2018}
S.~N. Young, E.~Kayacan, and J.~M. Peschel, ``Design and field evaluation of a
  ground robot for high-throughput phenotyping of energy sorghum,''
  \emph{Precision Agriculture}, vol.~20, no.~4, p. 697–722, August 2019.

\end{thebibliography}
\bibliographystyle{IEEEtran}

\begin{IEEEbiography}[{\includegraphics[width=1in,height=1.25in,clip,keepaspectratio]{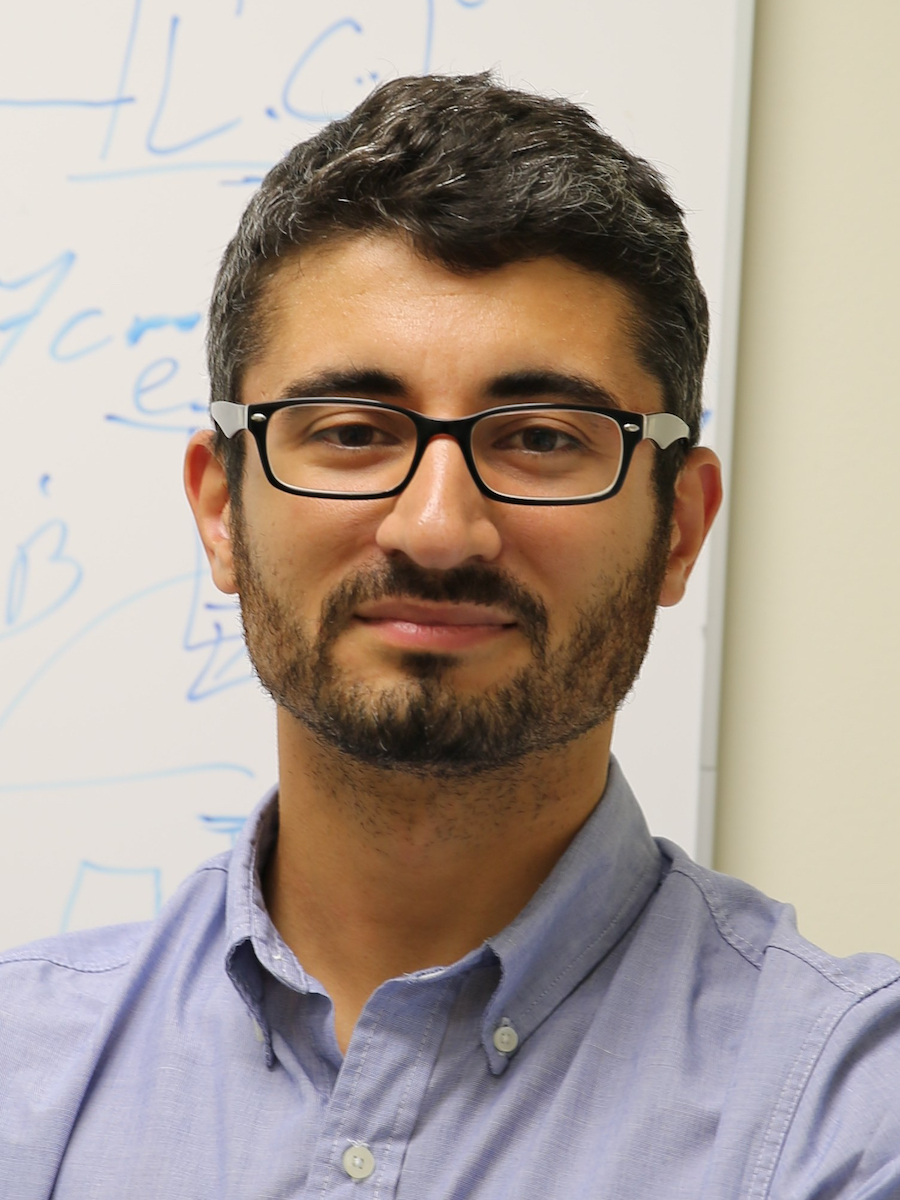}}]{Erkan Kayacan} (S\textquoteright 12-M\textquoteright 16-SM\textquoteright 19) received the B.Sc. degree in mechanical engineering and the M.Sc. degree in system dynamics and control from Istanbul Technical University, Istanbul, Turkey in 2008 and 2010, respectively. He received the Ph.D. degree in Mechatronics, Biostatistics and Sensors from University of Leuven (KU Leuven), Leuven, Belgium in 2014. 

He is currently a Lecturer with the School of Mechanical and Mining Engineering at the University of Queensland (UQ), Australia. Prior to UQ, he was a Postdoctoral Researcher with Delft University of Technology, University of Illinois at Urbana-Champaign, and Massachusetts Institute of Technology. His research interests include real-time optimization-based control and estimation methods, nonlinear control theory, learning algorithms and machine learning with a heavy emphasis on applications to autonomous systems and field robotics. 

Dr. Kayacan is a recipient of the Best Systems Paper Award at Robotics: Science and Systems (RSS) in 2018.

\end{IEEEbiography}

\clearpage

\end{document}